\renewcommand{\thesubfigure}{\alph{subfigure}}
\newcommand{\argmax}{\mathop{\rm argmax}}
\def\0{\boldsymbol{0}}
\def\Y{\boldsymbol{Y}}
\def\C{\mathcal{C}}
\def\U{\boldsymbol{U}}
\def\P{\boldsymbol{P}}
\def\I{\mathcal{I}}
\def\J{\mathcal{J}}
\def\M{\mathcal{M}}
\def\P{\mathcal{P}}
\def\U{\mathcal{U}}
\def\Y{\mathcal{Y}}
\DeclareMathOperator*{\height}{height}
\DeclareMathOperator*{\inn}{in}
\DeclareMathOperator*{\out}{out}
\DeclareMathOperator{\rank}{rank}
\DeclareMathOperator*{\Gr}{Gr}
\def\RR{\mathbb{R}}
\def\CC{\mathbb{C}}
\long\def\answer#1{}
\long\def\comment#1{}
\begin{document}

\title{Unlabeled Principal Component Analysis \\ and Matrix Completion}

\author{\name Yunzhen Yao \email yunzhen.yao@epfl.ch \\
       \addr School of Computer and Communication Sciences \\
       EPFL \\
       CH-1015 Lausanne, Switzerland
       \AND
       \name Liangzu Peng \email lpenn@seas.upenn.edu \\
       \addr 
   	  \AND
      \name Manolis C. Tsakiris \email manolis@amss.ac.cn \\
  	  \addr Key Laboratory for Mathematics Mechanization \\
  	  Academy of Mathematics and Systems Science \\
  	  Chinese Academy of Sciences \\
  	  Beijing, 100190, China
}

\editor{TBD}

\maketitle
\footnotetext{A short version of this work has been published in NeurIPS 2021 \citep{yao2021unlabeled}.} 

\begin{abstract}
We introduce robust principal component analysis from a data matrix in which the entries of its columns have been corrupted by permutations, termed Unlabeled Principal Component Analysis (UPCA). Using algebraic geometry, we establish that UPCA is a well-defined algebraic problem in the sense that the only matrices of minimal rank that agree with the given data are row-permutations of the ground-truth matrix, arising as the unique solutions of a polynomial system of equations. Further, we propose an efficient two-stage algorithmic pipeline for UPCA suitable for the practically relevant case where only a fraction of the data have been permuted. Stage-I employs outlier-robust PCA methods to estimate the ground-truth column-space. Equipped with the column-space, Stage-II applies recent methods for unlabeled sensing to restore the permuted data. Allowing for missing entries on top of permutations in UPCA leads to the problem of unlabeled matrix completion, for which we derive theory and algorithms of similar flavor. Experiments on synthetic data, face images, educational and medical records reveal the potential of our algorithms for applications such as data privatization and record linkage.
\end{abstract}

\begin{keywords}
  robust principal component analysis, matrix completion, record linkage, data re-identification, algebraic geometry
\end{keywords}

\section{Introduction}\label{section:Introduction}

In principal component analysis, a cornerstone of machine learning and data science, one is given a data matrix $\tilde{X}$,  assumed to be a corrupted version of a ground-truth data matrix $X^*=[x_1^* \cdots x_n^*] \in \RR^{m \times n}$, typically but not necessarily assumed to have low rank, and the objective is to estimate $X^*$ or the column-space $S^* \subset \RR^m$ of $X^*$. The most common types of corruptions that have attracted interest in modern studies are additive sparse perturbations \citep{candes2011robust,zhang2018robust}, outlier data points that lie away from $S^*$ \citep{xu2012robust,vaswani2018robust}, and missing entries, the latter also known as low-rank \textemdash{or even high-rank \citep{eriksson2012high,ongie2017algebraic,ongie2021tensor}\textemdash} matrix completion \citep{candes2009exact,ganti2015matrix,balzano2018streaming,eftekhari2019streaming,bertsimas2020fast}. 

Recently, permutations have been emerging as another type of data corruption, typically set in the context of linear regression, where the correspondences between the input and the output data have been partially distorted or are even entirely unavailable
\citep{Unnikrishnan-Allerton2015,Unnikrishnan-TIT18,Hsu-NIPS17,Slawski-JoS19,Slawski-JMLR2020,Zhang-ICML2020,marano2020making, Wang-TSP2020, Tsakiris-TIT2020,Mazumder-MP2023,peng2022global,onaran2022shuffled,azadkia2022linear}. There, one is given a point $x^*$ of a linear subspace $S^*$, but only up to a permutation of its coordinates, say $\tilde{x}=\Pi^* x^*$ with $\Pi^*$ an unknown permutation, and the goal is to find $x^*$ from the data $\tilde{x},S^*$. An alternative formulation for this problem is that given a matrix $A\in \mathbb{R}^{m \times r}$, which can be regarded as a basis of the linear subspace $S^*$, and a response vector $\tilde{x} = \Pi^* A c^*$ shuffled by an unknown permutation $\Pi^*$, the goal is to find    $\Pi^*$ and the regression coefficients $c^*$.  This \emph{Unlabeled Sensing} \citep{Unnikrishnan-Allerton2015,Unnikrishnan-TIT18} problem has many potential applications, e.g., record linkage \citep{Slawski-JoS19,Slawski-JMLR2020}, visual \citep{Cruz-CVPR2017,Cruz-PAMI2019} or textual \citep{Brown-1990,Schmaltz-2016,Shen-NeurIPS2017} permutation learning, matching problems in neuroscience \citep{nejatbakhsh2021neuron} and biology \citep{Abid-Allerton2018,ma2021optimal,Xie-ICLR2021}, and DNA-based data storage \citep{Shomorony-TIT2021,Weinberger-TIT2022,Lenz-TIT2022,Ravi-JSAIT2022}.

While methods for unlabeled sensing rely on knowledge of the source subspace $S^*$, this is not always known in practice. On the other hand, data of the form $\tilde{X}=[\tilde{x}_1,\dots,\tilde{x}_n] \in \RR^{m \times n}$ with $\tilde{x}_j = \Pi_j^*x_j^*$ an unknown permutation of an unknown point $x_j^* \in S^*$, are often available, thus raising the question of whether $S^*$ can be estimated from $\tilde{X}$. An important example of this situation is record linkage \citep{Fellegi-1969,Muralidhar-JIPS2017,Antoni-2019}, where the objective is to integrate data from independent sources, $\tilde{x}_1,\dots,\tilde{x}_n\in\mathbb{R}^m$, for subsequent data analysis. Since the entries of different records $\tilde{x}_i$'s are collected separately, the data matrix $\tilde{X}$ is \emph{unlabeled} in the sense that, the entries of its $i$-th row do not necessarily correspond to the same entity. Such kind of unlabeled data $\tilde{X}$ also arise in the context of data privatization, where the data provider anonymizes the original data $X^*$ by permuting each column of $X^*$ prior to release \citep{DomingoFerrer-IS2016,He2011PermutationAI}.  Data re-identification is a concern, since companies with privacy policies, health care providers, and financial institutions may release the collected data after anonymization. Understanding the fundamental limits of re-identifying the original data $X^*$ from the released ones $\tilde{X}$ is essential for striking a balance between data privacy and data preservation \citep{abowd2019stepping}.
Applications with unlabeled data also arise in the multiple-image correspondence problem \citep{zeng2012finding,ji2014robust} in image processing and computer vision. \cite{oliveira2005optimal} showed that estimating the correspondence of points across a sequence of images of a single rigid body motion, can be expressed as a rank-minimization problem in terms of partial permutation matrices. 

\subsection{Related Work} 
In this section, we briefly review some existing work on three problems that  interconnect in this paper; that is, unlabeled sensing, robust principal component analysis with outliers, and matrix completion with outliers. Beyond them, we also mention the recent and related works of \cite{breiding2018learning,breiding2023algebraic} and \cite{tachella2022sampling}, that combine flavors from data science, inverse problems, and algebraic models. 

\subsubsection{Unlabeled Sensing}\label{subsection:us-review}
There is a large literature on application-specific problems that involve lack of correspondences, e.g. in computer vision or statistics; here we just review four recent methods for unlabeled sensing \citep{Unnikrishnan-TIT18} that will be used in this paper. Recall that in unlabeled sensing one is given a subspace $S^* \subset \RR^m$ of dimension $r$ and a point $\tilde{x}$ which is some permuted version of a point $x^* \in S^*$ and the goal is to recover $x^*$ from $S^*$ and $\tilde{x}$. A critical distinction among methods in the literature is the sparsity level $\alpha$ of the permutation, that is the ratio of coordinates that are moved by the permutation. 

The case of \textit{dense} permutations ($\alpha=1$) is extremely challenging, with existing methods only able to handle small ranks $r$. We consider two methods known to perform best in this regime. The algebraic-geometric method called AIEM in \cite{Tsakiris-TIT2020} has linear complexity in $m$, and instead concentrates its effort on solving a polynomial system of $r$ equations in $r$ variables to produce an initialization for an expectation maximization algorithm. Currently, this method is efficient for $r \le 5$ and intractable otherwise. A very different method is CCV-Min of \cite{Peng-SPL2020}, which proceeds via branch-and-bound together with concave minimization and can handle $r\leq 8$, though intractable otherwise. For a picture regarding the computational complexity of existing unlabeled sensing methods, we refer to the discussion in \cite{Peng-SPL2020}.

For \textit{sparse} permutations (small $\alpha$) we review two methods \citep{Slawski-JoS19,Slawski-JCGS2021}. The $\ell_1$-RR algorithm of \cite{Slawski-JoS19} applies an $\ell_1$ robust linear regression relaxation and it works when $\alpha\leq 0.5$. Another approach is the Pseudo-Likelihood method (PL) of \cite{Slawski-JCGS2021}, which fits a two-component mixture density for each entry of $\tilde{x}$, one accounting for fixed data and the other for permuted data. The fitting is done via a combination of hypothesis testing, reweighted least-squares, and alternating minimization; while this method works well for $\alpha\leq 0.7$, it is sensitive to the particular basis of $S^*$ used to generate $\tilde{x}$.

The unlabeled sensing problem has in fact been explored, at least theoretically, towards greater generality \citep{Unnikrishnan-Allerton2015,Unnikrishnan-TIT18,Dokmanic-SPL2019,Tsakiris-ICML2019,Peng-ACHA2021,tsakiris2023determinantal}. In one such extended setting, already present in \citep{Unnikrishnan-Allerton2015,Unnikrishnan-TIT18}, we are only given a subset of coordinates of $\tilde{x}$ (and the subspace $S^*$) and we aim to recover $x^*$. We call this problem \emph{unlabeled sensing with missing entries}. It is a more challenging problem for which very few algorithms exist; e.g. see \citet{Elhami-ICASSP17,Tsakiris-ICML2019}. 
\citet{Tsakiris-ICML2019} proposes two algorithms: Algorithm-A is based on a combination of branch-and-bound and a dynamic programming strategy; Algorithm-B is a RANSAC-style method based also on dynamic programming computation. Both algorithms perform well, if the subspace dimension $r$ is sufficiently small (e.g., $\leq 3$) and if one is given sufficiently many entries of $\tilde{x}$.

\subsubsection{Robust PCA with Outliers}\label{section:rpca-review}
PCA methods with robustness to outliers will also play a role in this paper. Among a large literature we review four state-of-the-art methods inspired by sparse \citep{you2017provable}, cosparse \citep{tsakiris2018dual} and low-rank \citep{xu2012robust} \citep{rahmani2017coherence} representations. In that context, $\tilde{X}$ can be partitioned into inlier points that lie in an unknown $r$-dimensional subspace $S^* \subset \RR^m$ and outlier points that lie away from $S^*$; the goal is to recover $S^*$ from $\tilde{X}$. 

A successor of \cite{soltanolkotabi2012geometric}, the convex method of \cite{you2017provable}, which we refer to as Self-Expr, solves a self-expressive elastic net problem so that each $\tilde{x}_j$ is expressed as an $\ell_2$-regularized sparse linear combination of the other points. Inlier points need approximately $r$ other inliers for their self-expression as opposed to about $m$ points for outliers. The self-expressive coefficients are used to define transition probabilities of a random walk on the self-representation graph and the average of the $t$-step transition probability distributions for $t=1,\dots,T$ is used as a score for inliers vs outliers, with higher scores expected for the former. Then $\hat{S}$ is taken to be the subspace spanned by the $r$ top $\tilde{x}_j$'s. 

Dual Principal Component Pursuit (DPCP) of \cite{tsakiris2015dual,tsakiris2018dual} solves a non-smooth non-convex problem for an orthonormal basis $B^*$ of the orthogonal complement of $S^*$. In contrast to other robust-PCA methods, particularly those based on convex optimization, DPCP was shown in \cite{zhu2018dual,ding2021dual} to tolerate as many outliers as the square of the number of inliers, under a relative spherically uniform distribution assumption on inliers and outliers. This assumption is certainly not true for outliers obtained by permuting the coordinates of inliers, but we will experimentally see that an even stronger property holds for the case of UPCA (Figure \ref{fig:PCA}). 

The now classical outlier pursuit method of \cite{xu2012robust}, which we refer to as OP, decomposes via convex optimization $\tilde{X}$ into the sum of a low-rank matrix, representing the inliers, and a column-sparse matrix, representing the outliers. $\hat{S}$ is obtained as the $r$th principal component subspace of the low-rank part. Finally, the Coherence Pursuit (CoP) method of \cite{rahmani2017coherence} is based on the following simple but effective principle: with $\tilde{X}_{-j}$ the matrix $\tilde{X}$ with column $j$ removed, for each $\tilde{x}_j$ one computes its coherence $\tilde{X}^\top_{-j} \tilde{x}_j$ with the rest of the points. As it turns out, inliers tend to have coherences of higher $\ell_2$-norm than outliers, and the $r$ top $\tilde{x}_j$'s are taken to span $\hat{S}$.

\subsubsection{Matrix Completion with Outliers} \label{section:mco-review}
As mentioned in the introduction, matrix completion --- or robust PCA with missing entries --- has been a well-studied problem, with numerous developed theories \citep{candes2009exact,candes2010matrix,singer2010uniqueness,eriksson2012high,balcan2019non,tsakiris2023low} and algorithms \citep{cai2010singular,keshavan2010matrix,balzano2010online,majumdar2011some,tanner2013normalized,bertsimas2020fast}; see, e.g., \cite{davenport2016overview,vaswani2018static} for a survey.   

Closely related to this paper is a more general setting of matrix completion, where the given data matrix $\tilde{X}$ not only has some entries missing but also some of its columns are outliers. This setup is more challenging, and research on it is relatively scarce. \citet{chen2011robust,chen2015matrix} considered this problem, and proposed a convex program that minimizes a combination of a nuclear norm with an $\ell_{1,2}$ norm over a matrix of variables. Their method, which we call MCO, is shown to succeed for sufficiently many inliers and observed entries. We will make use of MCO later, to solve our \textit{unlabeled matrix completion} problem.

\subsection{Contributions}
In this paper, we consider the recovery of $X^*$ from its unlabeled version $\tilde{X}$, which we term \emph{Unlabeled Principal Component Analysis} (UPCA). We take one step further and generalize UPCA into \textit{Unlabeled Matrix Completion} (UMC),  where we now need to recover $X^*$ from only a subset of entries of $\tilde{X}$. We make contributions in the following three aspects: 
\paragraph{Theoretical contributions (Section \ref{section:theory})}
\begin{enumerate}
	\item We establish that as long as $r:=\rank(X^*)<\min\{m,n\}$ and $X^*$ is \emph{generic} (see Definition \ref{def:generic}), then up to a permutation of its rows, $X^*$ is the only matrix of rank less than or equal to $r$ that is compatible with $\tilde{X}$. This asserts that UPCA is a well-posed problem, since the inherent ambiguity of whether $\tilde{X}$ comes from $X^*$ or a row-permuted version of $X^*$ is in most cases practically harmless (Sections \ref{section:UPCA formulation} and \ref{section:UPCA-well-posed}).
	\item We establish that in this basic formulation, UPCA is a purely algebraic problem, by exhibiting a polynomial system of equations parametrized by $\tilde{X}$, whose solutions are all the row-permutations of $X^*$; solving the UPCA problem amounts to obtaining one such solution (Section \ref{section:UPCA-algebraic}).
	\item We furthermore generalize our UPCA theorems for UMC, thereby obtaining results of similar ``information-theoretical'' flavor for the scenario with permuted incomplete data (Section \ref{section:UMC-theory}).
    
 \end{enumerate}
 \paragraph{Algorithmic contributions (Section \ref{section:algorithms})}
 \begin{enumerate}
	\item Inasmuch as solving the polynomial system of UPCA is in principle NP-hard, we introduce an efficient algorithmic pipeline, Algorithm \ref{algo:UPCA}, for the practically relevant case where a significant part of the data have undergone the same \textit{dominant permutation}, while the rest of the points have been permuted arbitrarily (see Section \ref{section:UPCA-dominant-theory}); in the case of record linkage this would correspond to one of the records having much larger size than the others. The first stage of the pipeline employs PCA methods with robustness to outliers \citep{xu2012robust,soltanolkotabi2012geometric,rahmani2017coherence,you2017provable,tsakiris2018dual,zhu2018dual,lerman2018fast} to produce an estimate $\hat{S}$ of $S^*$ from $\tilde{X}$; the second stage of the pipeline uses unlabeled sensing methods \citep{Slawski-JoS19,Slawski-JCGS2021,Tsakiris-TIT2020,Peng-SPL2020} to furnish an estimate $\hat{X}$ of $X^*$ from $\hat{S}$ and $\tilde{X}$ (See Algorithm \ref{algo:UPCA}). Moreover, we introduce a simple but efficient algorithm for unlabeled sensing, Algorithm \ref{algo:LSRF}, based on least-squares with recursive filtration.
    \item Our algorithmic development for UMC is parallel to that of UPCA. We start with the dominant permutation assumption and introduce a two-stage algorithmic pipeline (Algorithm \ref{algo:UMC-proj-then-perm}). The first stage detects and completes inlier columns, and then estimates the subspace $S^*$ by matrix completion with column outliers (recall Section \ref{section:mco-review}). The second stage estimates the data matrix $X^*$ by solving the problem of unlabeled sensing on the projected coordinates for each column. 
 \end{enumerate}
 \paragraph{Experimental evaluation (Section \ref{section:experiments})}
 \begin{enumerate}
	\item We assess our algorithmic pipeline for UPCA and our proposed unlabeled sensing algorithm on synthetic data (Section \ref{subsection:RPCA-experiments} and \ref{section:UPCA synthetic}), face images (Section \ref{subsection:faces}), educational and medical records (Section \ref{section:data re-id}), with encouraging results. 
    \item We also perform experiments for the proposed UMC pipeline (Section \ref{subsection:experiments-UMC}). 
\end{enumerate}

\section{Theoretical Foundations}\label{section:theory}
In this section, we formulate and study two problems, unlabeled principal component analysis (UPCA) and unlabeled matrix completion (UMC). The goal of UPCA is to recover a ground-truth rank-deficient matrix $X^*$ from its unlabeled version $\tilde{X}$. The goal of UMC is to also recover $X^*$ from $\tilde{X}$, but now $\tilde{X}$ is a partial observation of a permuted version of $X^*$. See Figure \ref{fig:upca-umc-example} for an intuitive understanding of the setup; we will formalize the settings soon.
\begin{figure}[!h]
\centering
\subfloat[Ground-truth $X^*$ ]{$
\begin{pNiceMatrix} 
x_{11} & x_{12} & x_{13} & x_{14} \\
x_{21} & x_{22} & x_{23} & x_{24} \\
x_{31} & x_{32} & x_{33} & x_{34} \\
x_{41} & x_{42} & x_{43} & x_{44} 
\end{pNiceMatrix}$
\label{fig:upca-umc-example1}
}
\hspace{1cm}
\subfloat[UPCA data matrix $\tilde{X}$]{$
\begin{pNiceMatrix} 
x_{31} & x_{22} & x_{23} & x_{44} \\
x_{11} & x_{32} & x_{43} & x_{34} \\
x_{21} & x_{42} & x_{33} & x_{14} \\
x_{41} & x_{12} & x_{13} & x_{24} 
\end{pNiceMatrix}$
\label{fig:upca-umc-example2}
}
\hspace{1cm}
\subfloat[UMC data matrix $\tilde{X}$]{$
\begin{pNiceMatrix} 
x_{31} & * & x_{23} & x_{44} \\
* & x_{32} & x_{43} & * \\
x_{11} & x_{22} & * & x_{14} \\
x_{41} & * & x_{13} & x_{34} 
\end{pNiceMatrix}$
\label{fig:upca-umc-example3}
}
\caption{ (\ref{fig:upca-umc-example1}): Ground-truth matrix $X^*$. (\ref{fig:upca-umc-example2}): Data matrix $\tilde{X}$ for UPCA, obtained by shuffling each column of $X^*$ via some unknown permutation. (\ref{fig:upca-umc-example3}): Data matrix $\tilde{X}$ for UMC, obtained via removing some entries (indicated by $*$) and shuffling every column of $X^*$. In both UPCA and UMC, we need to recover $X^*$ from data $\tilde{X}$. }
\label{fig:upca-umc-example}
\end{figure}
\vspace{-0.5cm}

\subsection{Unlabeled Principal Component Analysis} \label{section:UPCA-theory-subsec}
\subsubsection{Problem Formulation} \label{section:UPCA formulation}

Let us denote by $\P_m$ the set of all permutations of coordinates of $\RR^m$. We let $X^*=[x_1^* \cdots x_n^*] \in \RR^{m \times n}$ be our ground-truth data matrix with rank $r<\min\{m,n\}$ and column space $S^*=\C(X^*)$, and we suppose that the available data are 
\begin{align}
	\tilde{X} = [\tilde{x}_1 \cdots \tilde{x}_n] = [\Pi_1^* x_1^* \cdots \Pi_n^* x_n^*] \in \RR^{m \times n}, \label{eq:Xtilde}
\end{align} where each $\Pi_j^*\in \P_m$ is an unknown permutation. Let $\P_m^n = \prod_{i \in [n]} \P_m$ be $n$ ordered copies of $\P_m$, where $[n]=\{1,\dots,n\}$. For $\underline{\pi} = (\Pi_1,\dots,\Pi_n) \in \P_m^n$ we set $\underline{\pi}(\tilde{X}) = [\Pi_1 \tilde{x}_1 \cdots \Pi_n \tilde{x}_n]$. We pose Unlabeled Principal Component Analysis (UPCA) as the following rank minimization problem:
\begin{align}
	\min_{\underline{\pi} \in \mathcal{P}_m^n} \, \, \rank \underline{\pi}(\tilde{X}) \label{eq:UPCA}
\end{align}  

First, note that \eqref{eq:UPCA} never has a unique solution, because if $\underline{\pi} = (\Pi_1,\dots,\Pi_n)$ is a solution, then so is $\underline{\pi}' = (\Pi \Pi_1,\dots,\Pi\Pi_n)$ for every permutation $\Pi \in \P_m$. This reveals an inherent ambiguity of UPCA: it is only possible to recover $X^*$ from $\tilde{X}$ up to a permutation $\Pi X^*$ of its rows. On the other hand, this is rather harmless in many situations, since $\Pi X^*$ is the same dataset as $X^*$ except that the row-features appear now in some different order. Thus, our hope in formulating \eqref{eq:UPCA} is that the only solutions are of the form $\underline{\pi}=(\Pi {\Pi_1^*}^{\top},\dots,\Pi {\Pi_n^*}^{\top})$ with $\Pi$ ranging across $\P_m$ and $\Pi_j^*$ as in \eqref{eq:Xtilde}.   However, without any other assumptions on the data $X^*$, there could in principle be additional undesired permutations that also give $\rank X^*$, or even worse, the minimum rank in \eqref{eq:UPCA} could be lower than $r=\rank X^*$. Our results show that for \emph{generic} enough data, such pathological situations do not occur, and the only solutions to \eqref{eq:UPCA} are the ones associated with row-permutations of $X^*$. 

\subsubsection{Elements of Algebraic Geometry} \label{section:elements of AG}

Before stating our results, we make the notion of \emph{generic} precise using some basic algebraic geometry \citep{cox2013ideals,harris2013algebraic}. Let $Z=(z_{ij})$ be an $m \times n$ matrix of variables $z_{ij}$ and $\RR[Z] = \RR\big[z_{ij}: \, i \in [m], j \in [n]\big]$ the ring of polynomials in the $z_{ij}$'s with real coefficients. An \emph{algebraic variety} of $\RR^{m \times n}$ is the set of solutions of a polynomial system of equations in $\RR[Z]$. In particular, the set of $(r+1)\times (r+1)$ determinants of $Z$ are polynomials in $z_{ij}$'s of degree $r+1$ and define the algebraic variety $$\M_r= \left. \right\{X\in \RR^{m\times n}|\rank X \le r\left. \right\},$$ since $\rank X \le r$ if and only if all $(r+1) \times (r+1)$ determinants of $X$ are zero. 

The algebraic variety $\M_r$ admits a topology, called \emph{Zariski topology}, which makes it convenient to work with. The closed sets in this topology are the \emph{algebraic subvarieties} of $\M_r$. These are sets of matrices of rank $\le r$, which in addition satisfy certain other polynomial equations in $\RR[Z]$. For example, the set of matrices of rank at most $r-1$ is a proper closed subset of $\M_r$, because in addition to the equations defining $\M_r$, it is further defined by requiring all $r \times r$ determinants to be zero. \emph{Open sets} in $\M_r$ are defined as complements of closed sets, or equivalently they are defined by requiring that certain sets of polynomials are not all simultaneously zero. For example, the set of matrices of rank exactly equal to $r$ is a proper open subset of $\M_r$ defined by the non-simultaneous vanishing of all $r \times r$ determinants of $Z$; a matrix has rank $r$ if and only if all $(r+1) \times (r+1)$ determinants are zero and least one $r \times r$ determinant is non-zero. Now, the algebraic variety $\M_r$ is \emph{irreducible} in the sense that it can not be described as the union of two proper algebraic subvarieties of it \citep{kleiman1971geometry}. A consequence of this is that non-empty open sets of $\M_r$ have the very important property of being topologically dense. This means that given a non-empty open set $\U \subset \M_r$ and a point $X \in \M_r$, every neighborhood of $X$ intersects $\U$. It follows that under any non-degenerate continuous probability measure on $\M_r$, a non-empty Zariski-open set of $\M_r$ has measure $1$. For example, the set of matrices in $\M_r$ of rank $r$ is non-empty and open, and thus it is dense. Hence a randomly sampled matrix in $\M_r$ under a continuous probability measure will have rank $r$ with probability $1$. We refer to such a fact by saying that a generic matrix in $\M_r$ has rank $r$. More generally:

\begin{definition}\label{def:generic}
	We say that a \emph{generic} matrix in $\M_r$ satisfies a property, if the property is true for every matrix in a non-empty open subset of $\M_r$. 
\end{definition}

\subsubsection{UPCA is a Well-Posed Problem} \label{section:UPCA-well-posed}
Our first theoretical result is the following:
\begin{theorem}\label{thm:main}
	For $X^*$ a generic matrix in $\M_r$, we have that $\rank \underline{\pi}(\tilde{X})  \ge r$ for any $\underline{\pi} \in \P_m^n$, with equality if and only if $\underline{\pi}(\tilde{X}) = \Pi X^*$ for some $\Pi \in \P_m$.  
\end{theorem}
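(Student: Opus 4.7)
The plan is to reparametrize the problem so that the theorem reduces to a dimension count on the Grassmannian $\Gr(r, \RR^m)$. Writing $\sigma_j := \Pi_j \tilde{\Pi}_j$, one has $\underline{\pi}(\tilde{X}) = F_{\underline{\sigma}}(X^*)$ with $F_{\underline{\sigma}}(X) := [\sigma_1 x_1 \cdots \sigma_n x_n]$, and the assignment $\underline{\pi} \mapsto \underline{\sigma}$ is a bijection of $\P_m^n$ onto itself. The theorem becomes: for every $\underline{\sigma} \in \P_m^n$, $\rank F_{\underline{\sigma}}(X^*) \ge r$, with equality iff $F_{\underline{\sigma}}(X^*) = \Pi X^*$ for some $\Pi \in \P_m$. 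As $\P_m^n$ is finite and $\M$ is irreducible, it suffices to construct, separately for each $\underline{\sigma}$, a Zariski-open dense $\U_{\underline{\sigma}} \subseteq \M$ on which the single-$\underline{\sigma}$ version holds, and then intersect.

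The key ingredient is a rank formula: for generic $X^* \in \M$ of rank exactly $r$ with column space $S^* := \C(X^*)$,
\[
\rank F_{\underline{\sigma}}(X^*) \;=\; \min\!\left( n, \; \dim \textstyle{\sum_{j=1}^n \sigma_j S^*} \right).
\]
The upper bound is immediate because the $j$-th column of $F_{\underline{\sigma}}(X^*)$ lies in the $r$-dimensional subspace $\sigma_j S^*$. For the lower bound, $\{X \in \M : \rank F_{\underline{\sigma}}(X) \ge k\}$ is Zariski-open in $\M$ (the complement of the vanishing locus of the $k \times k$ minors of $F_{\underline{\sigma}}(Z)$), so it suffices to exhibit one $X \in \M$ attaining the right-hand side. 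I would do this by fixing $S^*$ and iteratively selecting $x_j^* \in S^*$ so that $\sigma_j x_j^*$ lies outside the span of the previously chosen $\sigma_k x_k^*$ whenever $\sigma_j S^*$ is not already contained in that span, stopping when the running span equals $\sum_j \sigma_j S^*$ or all columns are exhausted.

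Given the rank formula, the rest is linear algebra on the Grassmannian. Since $\sum_j \sigma_j S^* \supseteq \sigma_1 S^*$ has dimension $\ge r$ and $n > r$ by hypothesis, $\rank F_{\underline{\sigma}}(X^*) \ge r$; equality forces $\dim \sum_j \sigma_j S^* = r$, hence $\sigma_j S^* = \sigma_1 S^*$ for every $j$, i.e., $\sigma_1^{-1} \sigma_j \in \mathrm{Stab}_{\P_m}(S^*)$. For each non-identity $\pi \in \P_m$, the fixed locus $\{S : \pi S = S\}$ is a proper closed subvariety of $\Gr(r, \RR^m)$; outside the finite union of these subvarieties a generic $S^*$ has $\mathrm{Stab}(S^*) = \{I\}$, which forces $\sigma_j = \sigma_1$ for every $j$, so $F_{\underline{\sigma}}(X^*) = \sigma_1 X^*$ and $\Pi := \sigma_1$ witnesses the equality case. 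Conversely, for any non-constant $\underline{\sigma}$, triviality of $\mathrm{Stab}(S^*)$ gives $\dim \sum_j \sigma_j S^* > r$ and hence $\rank F_{\underline{\sigma}}(X^*) > r$, which is incompatible with $F_{\underline{\sigma}}(X^*) = \Pi X^*$ since $\Pi X^*$ has rank $r$.

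Accordingly I take $\U$ to be the intersection of $\{X \in \M : \rank X = r\}$, $\{X : \mathrm{Stab}(\C(X)) = \{I\}\}$, and the sets $\{X \in \M : \rank F_{\underline{\sigma}}(X) > r\}$ for non-constant $\underline{\sigma} \in \P_m^n$; this is a finite intersection of Zariski-open subsets, non-empty (the first two for obvious reasons, the third by the rank formula) and hence dense in the irreducible $\M$. The main obstacle is the rank formula itself: the iterative column-selection argument must produce a specific non-vanishing $k \times k$ minor with $k = \min(n, \dim \sum_j \sigma_j S^*)$ of $F_{\underline{\sigma}}(X^*)$, which requires verifying that at each step the $r$-dimensional freedom inside $\sigma_j S^*$ is not entirely absorbed into the running span. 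Once this is in place, the Grassmannian argument of the third paragraph closes everything out.
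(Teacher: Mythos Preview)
Your overall architecture matches the paper's: reparametrize via $\sigma_j=\Pi_j\tilde\Pi_j$, then for each non-constant $\underline{\sigma}$ show that $\{X\in\M:\rank F_{\underline{\sigma}}(X)>r\}$ is non-empty (hence open and dense), and intersect over the finitely many $\underline{\sigma}$. The gap is that your rank formula is false, not merely unproved. Take $r\ge 2$, $m\ge 2r$, $n\ge r+2$, set $\sigma_1=\cdots=\sigma_{n-1}=I$ and let $\sigma_n=\pi$ be an $m$-cycle. Working over $\CC$ as the paper does, a generic $r$-plane $S^*$ satisfies $S^*\cap\pi S^*=\{0\}$ (diagonalize $\pi$: its eigenvalues are distinct, and one easily writes down an $S^*$ meeting $\pi S^*$ trivially), so $\dim\sum_j\sigma_jS^*=2r$ and your formula predicts $\rank F_{\underline{\sigma}}(X^*)=\min(n,2r)\ge r+2$. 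But columns $1,\dots,n-1$ of $F_{\underline{\sigma}}(X^*)$ all lie in $S^*$ and column $n$ contributes at most one further direction, so $\rank F_{\underline{\sigma}}(X^*)\le r+1$ for \emph{every} choice of the $x_j^*$'s in $S^*$. The formula ignores multiplicities: a subspace $\sigma_jS^*$ appearing only once among the $\sigma_kS^*$'s supplies one column, not $r$. Your inference ``$\rank=r\Rightarrow\dim\sum_j\sigma_jS^*=r$'' therefore fails, and the iterative column-selection cannot reach the stated target.

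What you actually need, and what is true, is the weaker claim that for every non-constant $\underline{\sigma}$ there exists $X\in\M$ with $\rank F_{\underline{\sigma}}(X)\ge r+1$. The paper proves exactly this (Lemma~3) via a construction through the $BC$-parametrization rather than through stabilizers on the Grassmannian. First (Lemma~1) eigenvectors of the $\sigma_{j_k}$ are used to produce $r$ linearly independent permuted columns; then (Lemma~2) one shows $\Pi x_j\notin\C(X)$ generically for $\Pi\ne I$; finally (Lemma~3) these are combined to place one further unpermuted column outside the span of any $r$ permuted ones whose permutations are not all identity, after left-multiplying by $\sigma_1^{-1}$. Your stabilizer condition $\mathrm{Stab}(\C(X))=\{I\}$ is essentially a weak form of the paper's Lemma~2 and could be slotted into such an argument, but non-emptiness of $\{X:\rank F_{\underline{\sigma}}(X)>r\}$ still demands a construction of comparable care; the rank formula is not available as a shortcut.
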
 

Theorem \ref{thm:main} says that for $X^*\in \M_r$ generic, and up to a permutation of the coordinates of $\RR^m$, $S^*$ is the unique $r$-dimensional subspace that explains the data $\tilde{X}$ in the UPCA sense, and $r= \rank X^*$ is the minimum objective in \eqref{eq:UPCA}.

\subsubsection{UPCA is an Algebraic Problem} \label{section:UPCA-algebraic}

How can one go about solving the discrete optimization problem \eqref{eq:UPCA}? In general, brute force selection of the $\Pi_j$'s has complexity $\mathcal{O}\big((m!)^n\big)$, which is out of the question. On the other hand, problem \eqref{eq:UPCA} has a rich algebraic structure, which allows us to show that $X^*$, up to a permutation of its rows, is the unique solution to a polynomial system of equations. 

To begin with, for each $j \in [n]$ and each $\ell \in [m]$, we define the following column-symmetric polynomials of $\RR[Z]$:
\begin{align}
	\bar{p}_{\ell,j}(Z) := \sum_{i \in [m]} z_{ij}^\ell, \ \ \ \ 
	p_{\ell,j}(Z) := \bar{p}_{\ell,j}(Z) - \bar{p}_{\ell,j}(\tilde{X})  \nonumber
\end{align}  
 Note that $\bar{p}_{\ell,j}\big(\underline{\pi}(Z)\big) = \bar{p}_{\ell,j}(Z)$ for any $\underline{\pi} \in \P_m^n$ and thus $\bar{p}_{\ell,j}(\tilde{X}) = \bar{p}_{\ell,j}(X^*)$. Now let us think of $X \in \M_r$ as a product of two matrices of size $m \times r$ and $r \times n$, and let us define another polynomial ring with variables associated to these two factors. For $i = r+1,\dots,m$, and $k \in [r]$ and $j \in [n]$, we let $b_{ik}, c_{kj}$ be a new set of variables over $\RR$. Organize the $b_{ik}$'s to occupy the $(m-r) \times r$ bottom block of an $m \times r$ matrix $B$ whose top $r \times r$ block is the identity matrix of size $r$, and the $c_{kj}$'s into a $r \times n$ matrix $C=(c_{kj})$. For $i \in [m]$, we write $b_i^\top$ for the $i$-th row of $B$; for $j \in [n]$, we write $c_j$ for the $j$-th column of $C$. With $\tilde{x}_{ij}, x_{ij}^*$ the $i$-th coordinate of $\tilde{x}_j, x_j^*$ respectively, we obtain polynomials $q_{\ell,j}$ for $\ell \in [m], \, j \in [n]$ of $\RR[B,C]$ by substituting $z_{ij} \mapsto b_i^\top c_j$ in the $p_{\ell,j}(Z)$'s above: 
\begin{align*}
	q_{\ell,j}(B,C) &:= \bar{p}_{\ell,j}(BC) - \bar{p}_{\ell,j}(\tilde{X}) \nonumber \\
	&= \sum_{i \in [m]} (b_i^\top c_j)^\ell - \sum_{i \in [m]} \tilde{x}_{ij}^\ell \nonumber \\
	&=\sum_{i \in [m]} (b_i^\top c_j)^\ell - \sum_{i \in [m]} {x^*_{ij}}^\ell  \nonumber 
\end{align*} 
The set of common roots of all $q_{\ell,j}$'s is an algebraic variety $\Y_{X^*}$ that depends only on $X^*$:
\begin{align} 
	\Y_{X^*} = & \big\{ (B',C') \in  \RR^{m \times r} \times \RR^{r \times n}\, | \, q_{\ell,j}(B',C')=0, \, \forall \ell \in [m], \, \forall j \in [n]; \, \, \, B'_{[r],[r]} = I_r \big\} \nonumber 
\end{align}
Here, $B'_{[r],[r]} = I_r$ signifies that the top $r \times r$ block of $B' \in  \RR^{m \times r}$ is the identity matrix. Then, with $\Pi \in \P_m$, if the column-space $\C(\Pi X^*)$ of $\Pi X^*$ does not drop dimension upon projection onto the first $r$ coordinates, then there exists a unique basis $B_\Pi^*$ of $\C(\Pi X^*)$ with the identity matrix occurring at the top $r \times r$ block. In that case, there is a unique factorization $\Pi X^* =  B_\Pi^* C^*_\Pi$  and the point $(B_\Pi^*,C_\Pi^*)$ lies in the variety $\Y_{X^*}$ because
\begin{align}
	q_{\ell,j}(B_\Pi^*,C_\Pi^*) &= \bar{p}_{\ell,j}(B_\Pi^*C_\Pi^*) - \bar{p}_{\ell,j}(\tilde{X})  \nonumber \\
	& = \bar{p}_{\ell,j}(\Pi X^*) - \bar{p}_{\ell,j}(X^*)  \nonumber \\
	& = \bar{p}_{\ell,j}(X^*) - \bar{p}_{\ell,j}(X^*) = 0. \nonumber 
\end{align} 
Our second result says that if $X^*$ is generic, then all points of $\Y_{X^*}$ are of this type. That is, they correspond to factorizations $B_\Pi^* C^*_\Pi$ of $\Pi X^*$ as $\Pi$ varies across all permutations: 

\begin{theorem} \label{thm:BC}
	For a generic matrix $X^*$ in $\M_r$ we have
	\begin{align} 
		\Y_{X^*} =& \big\{ (B_\Pi^*,C_\Pi^*)  \in  \RR^{m \times r} \times \RR^{r \times n} \, |  \, \Pi \in \P_m ;  \, \, B^*_{\Pi,[r],[r]} = I_r;  \, \, \Pi X^* =B_\Pi^* C_\Pi^* \big\} \nonumber 
	\end{align}
\end{theorem}

Thanks to Theorem \ref{thm:BC} we have the following important conceptual finding. Assuming $X^*$ is generic, to obtain $X^*$ up to some permutation of its rows from $\tilde{X}$, one only needs to compute an arbitrary root $(B',C')$ of the polynomial system of equations
\begin{align}
	q_{\ell,j}(B,C)=0, \, \forall \ell \in [m], \, \forall j \in [n] \label{eq:q}
\end{align} and multiply its factors to get $B'C'$. Developing a polynomial system solver for UPCA would involve two main challenges: attaining robustness to noise and scalability. We leave such an endeavor to future research.

\subsubsection{UPCA with Dominant Permutations} \label{section:UPCA-dominant-theory}

\begin{figure}[!h]
\centering
\subfloat[Ground-truth $X^*$ ]{$
\begin{pNiceMatrix} 
x_{11} & x_{12} & x_{13} & x_{14} \\
x_{21} & x_{22} & x_{23} & x_{24} \\
x_{31} & x_{32} & x_{33} & x_{34} \\
x_{41} & x_{42} & x_{43} & x_{44} 
\end{pNiceMatrix}$
\label{fig:upca-example1}
}
\hspace{1cm}
\subfloat[UPCA data matrix $\tilde{X}$]{$
	\begin{pNiceMatrix} 
		x_{31} & x_{22} & x_{23} & x_{44} \\
		x_{11} & x_{32} & x_{43} & x_{34} \\
		x_{21} & x_{42} & x_{33} & x_{14} \\
		x_{41} & x_{12} & x_{13} & x_{24} 
	\end{pNiceMatrix}$
	\label{fig:upca-example2}
}
\hspace{1cm}
\subfloat[UPCA data matrix with a dominant permutation]{$
\begin{pNiceMatrix} 
x_{41} & x_{42} & x_{23} & x_{44} \\
x_{11} & x_{12} & x_{43} & x_{14} \\
x_{21} & x_{22} & x_{33} & x_{24} \\
x_{31} & x_{32} & x_{13} & x_{34} 
\CodeAfter
  \tikz \node [draw, rounded corners, fit = (1-3) (4-3)] {};
\end{pNiceMatrix}$
\label{fig:upca-example3}
}
\caption{ (\ref{fig:upca-example1}): Ground-truth matrix $X^*$. (\ref{fig:upca-example2}): Data matrix $\tilde{X}$ for UPCA, obtained by shuffling each column of $X^*$ via some unknown permutation. (\ref{fig:upca-example3}): Data matrix $\tilde{X}$ for UPCA with a dominant permutation, obtained via shuffling some columns (columns 1, 2, 4 in the figure) by the same permutation and shuffling others arbitrarily (circled column 3).  }
\label{fig:upca-dom-example}
\end{figure}

In this section we consider a special case of interest, where part of the data have undergone the same \emph{dominant} permutation (see Figure \ref{fig:upca-dom-example}). To make this precise, we define the multiplicity $\mu(\Pi)$ of a permutation $\Pi \in \P_m$ to be the number of times that $\Pi$ appears as $\Pi= \Pi_j^*$ in \eqref{eq:Xtilde} with $j$ ranging in $[n]$. Figure \ref{fig:upca-example3} shows an example for the case $\mu(\Pi_{1}^{*}) = 3$ and $\mu(\Pi_{3}^{*}) = 1$. In fact, given the inherent ambiguity of UPCA discussed above, we may as well take this dominant permutation to be the identity matrix $I_m$ of size $m \times m$. We have:

\begin{theorem} \label{thm:dominant}
	Suppose that $\mu(I_m) \ge r+1$ while $\mu(\Pi) < r$ for any other $\Pi \neq I_m$. Then for a generic $X^* \in \M_r$, we have that $S^*$ is the unique solution to the following consensus maximization problem
	\begin{align}\label{eq:consensus-max-UPCA}
		\max_{\dim S \le r} \, \, \# \{ \tilde{x}_j \, | \, \tilde{x}_j \in S \, ; \, j \in [n]\}, 
	\end{align} where $\#$ denotes the cardinality of a set, and the maximization is taken over all subspaces $S \subset \RR^m$ of dimension $\le r$.
\end{theorem}

Theorem \ref{thm:dominant} says that for sufficiently generic ground-truth data $X^*$, the given data $\tilde{X}$ admit a natural partition into a set of inliers and outliers with respect to the linear subspace $S^*$:
\begin{align}
	\tilde{X}_{\text{in}} := \{\tilde{x}_j \, | \, \tilde{x}_j \in S^* \}, \, \, \, \, \, \, 
	\tilde{X}_{\text{out}} :=  \{\tilde{x}_j \, | \, \tilde{x}_j \not\in S^* \}  \nonumber
\end{align} Of course we do not know what the partition into inliers and outliers is, because we do not know what $S^*$ is. But the presence of this geometric structure is enough for PCA methods with robustness to outliers to operate on $\tilde{X}$ in order to estimate $S^*$. Section \ref{section:two-stage UPCA} proceeds algorithmically building on this insight.

\subsection{Unlabeled Matrix Completion}\label{section:UMC-theory}

A generalization of UPCA with practical significance is to consider PCA from data corrupted by both permutations and missing entries. To proceed we need some extra notations. 

With $\omega_j$ a subset of $[m]$ we let $P_{\omega_j} \in \RR^{m \times m}$ be the matrix representing the projection of $\RR^m$ onto the coordinates contained in $\omega_j$, that is $P_{\omega_j}$ is a diagonal matrix with the $k$th diagonal element non-zero and equal to $1$ if and only if $k \in \omega_j$. With $\omega_j$ as above for every $j \in [n]$, we write $\Omega = \bigcup_{j \in [n]} \omega_j \times \{j\} \subset [m] \times [n]$ and $\underline{p}_\Omega = (P_{\omega_1}, \dots, P_{\omega_n})$. Let $\RR^{\Omega}$ be the subspace of $\RR^{m \times n}$ of all matrices that have zeros in the complement of $\Omega$. The association $X=[x_1 \cdots x_n] \mapsto \underline{p}_\Omega (X)=[P_{\omega_1}  x_1 \cdots P_{\omega_n} x_n]$ induces a map $$\underline{p}_\Omega: \M_r \longrightarrow  \RR^{\Omega}$$ 

With this notation, in ordinary bounded-rank matrix completion (of which low-rank matrix completion is a special case) one is given a partially observed matrix $\underline{p}_\Omega(X^*)$ and the objective is to compute an at most rank-$r$ completion, that is an element of the \emph{fiber} 
$$\underline{p}_\Omega^{-1}\big( \underline{p}_\Omega(X^*) \big) = \big\{X \in \M_r \, | \, \, \underline{p}_\Omega(X) = \underline{p}_\Omega(X^*) \big\}$$ A big question is to characterize the observation patterns $\Omega$ for which $\underline{p}_\Omega(X^*)$ is generically finitely completable, in the sense that there exists a dense open set $\U$ of $\M_r$ such that for every $X^* \in \U$ the fiber $\underline{p}_\Omega^{-1}\big( \underline{p}_\Omega(X^*) \big)$ is a finite set. Even harder is the characterization of the $\Omega$'s that are generically uniquely completable, i.e. the fiber consists only of $X^*$. Both of these questions remain open in their generality, while several authors have made progress from different points of view, including rigidity theory \citep{singer2010uniqueness}, algebraic combinatorics \citep{kiraly2012combinatorial,kiraly2015algebraic}, tropical geometry \citep{bernstein2017completion} and algebraic geometry \citep{tsakiris2023low,tsakiris2020results}. 

Next, we let $\P_{\omega_j}$ be the permutations $\Pi \in \P_m$ that permute only the coordinates in $\omega_j$ and set $\P_{\Omega} = \prod_{j \in [n]} \P_{\omega_j}$.  With $\underline{\pi}_\Omega=(\Pi_1,\dots,\Pi_n) \in \P_{\Omega}$ the association $X=[x_1 \cdots x_n] \mapsto \underline{\pi}_\Omega (X)=[\Pi_1  x_1 \cdots \Pi_n x_n]$ induces a map 
$$\underline{\pi}_\Omega: \RR^{\Omega} \longrightarrow  \RR^{\Omega}$$
Now suppose that the available data matrix $\tilde{X}$ is of the form $\tilde{X} = \underline{\tilde{\pi}}_\Omega \circ \underline{p}_\Omega (X^*)$ for some $\underline{\pi}_\Omega^*  \in \P_{\Omega}$. Then the problem of \emph{unlabeled matrix completion} can be posed as finding an element $X$ in the fiber $\big(\underline{\pi}_\Omega \circ \underline{p}_\Omega\big)^{-1}(\tilde{X})$ of some map
$$\M_r \stackrel{ \underline{p}_\Omega}{\longrightarrow}  \RR^{\Omega} \stackrel{\underline{\pi}_\Omega}{\longrightarrow} \RR^{\Omega}$$ Assuming $\tilde{X}$ is generic, $\underline{p}_\Omega$ can be determined by inspection of the missing-pattern of $\tilde{X}$, while $\underline{\tilde{\pi}}_\Omega$ is unknown, as in unlabeled-PCA. Also, for a fixed $\underline{\pi}_\Omega  \in \P_{m, \Omega}$ there is no a priori guarantee that $\tilde{X}$ is in the image of the map $\underline{\pi}_\Omega \circ \underline{p}_\Omega$, while as $\underline{\pi}_\Omega$ varies in $\P_{m, \Omega}$ more than one $\underline{\pi}_\Omega \circ \underline{p}_\Omega$'s may reach $\tilde{X}$ via possibly infinitely many $X$'s in $\M_r$.

For this model, we will obtain theoretical recovery guarantees in Sections \ref{section:UMC-finite-recovery} and \ref{section:UMC-algebraic}. 
Under the dominant permutation hypothesis, we will have theoretical assertions in Section \ref{section:UMC-dominant-theory}, which  further leads us to an algorithm in Section \ref{section:UMC pipeline} and experimental analysis in Section \ref{subsection:experiments-UMC}.

\subsubsection{Finite Recovery for UMC} \label{section:UMC-finite-recovery}
In what follows we describe conditions under which finitely many $X \in \M_r$ explain the data $\tilde{X}$ for UMC. We exploit recent results of \cite{tsakiris2023low,tsakiris2020results}, where a family of generically finitely completable $\Omega$'s was studied. The following definition \citep{sturmfels1993maximal} is needed for the description of the family. 
\begin{definition}
	An $(r,m)$-SLMF (Support of a Linkage Matching Field) is a set
	$$\Phi = \bigcup_{j\in [m-r]} \varphi_{j}\times \{j\} \subset [m] \times [m-r]$$
	with the $\varphi_{j}$'s subsets of $[m]$ of cardinality $r+1$, satisfying 
	$$\# \bigcup_{j\in \mathcal{T}} \varphi_{j}\ge \# \mathcal{T} + r, \forall \mathcal{T}\subseteq [m-r].$$
\end{definition}

We have the following finiteness result for unlabeled matrix completion: 
\begin{theorem} \label{thm:main-UMC}
	Suppose $\Omega \subset [m]\times [n]$ satisfies the following two conditions. First, $\# \omega_{j}\ge r$ for every $j\in [n]$. Second, there exists a partition $[n] = \bigcup_{\nu\in [r]}\mathcal{J}_{\nu}$ of $[n]$ into $r$ subsets $\mathcal{J}_{\nu}$, such that for every $\nu\in [r]$ there exist $m-r$ subsets $\varphi_{j}^{\nu} \in \bigcup_{k\in\mathcal{J}_{\nu}}\Omega_{k}$ with $j\in [m-r]$ such that $\Phi_{\nu} = \bigcup_{j\in [m-r]} \varphi_{j}^{\nu}\times \{j\}$ is an $(r,m)$-SLMF.  For $\tilde{X} = \underline{\pi}_\Omega^* \circ \underline{p}_\Omega (X^*)$, where $X^*$ is a generic matrix in $\M_r$, and $\underline{\pi}_\Omega^* \in \P_{m, \Omega}$, the following set of unlabeled completions is finite:
	\begin{align}
		\bigcup_{\underline{\pi}_\Omega \in  \P_{m, \Omega}} \big(\underline{\pi}_\Omega \circ \underline{p}_\Omega\big)^{-1}(\tilde{X})   \label{eq:unlabeled-completions}
	\end{align}
\end{theorem} 

The set \eqref{eq:unlabeled-completions} can be thought of as the set of all at most rank-$r$ unlabeled completions of $\tilde{X}$. They can be computed, at least on a conceptual level, in a similar fashion as in UPCA by symmetric polynomials, this time supported on $\Omega$. 

\subsubsection{UMC is an Algebraic Problem} \label{section:UMC-algebraic}
We extend Theorem \ref{thm:BC} to reveal an algebraic structure of the UMC problem:
\begin{theorem} \label{thm:BC-UMC}
	Suppose $\Omega$ satisfies the hypothesis of Theorem \ref{thm:main-UMC}. Then there is a Zariski-open dense set $\U$ in $\M_r$, such that for every $X^* \in \M_r$ the unlabeled completions \eqref{eq:unlabeled-completions} of $\tilde{X}$ are of the form $B' C'$, with identity in the top $r \times r$ block of $B'$, and $(B',C')$ ranging among the finitely many roots of the polynomial system 
	$$\sum_{i \in \omega_j} (b_i^\top c_j)^\ell - \sum_{i \in \omega_j} \tilde{x}_{ij}^\ell = 0, \, \, \, \, \, \, j \in [n], \, \ell \in [\#\omega_j]$$
	In particular, for every root $(B',C')$ there is $\underline{\pi}_\Omega \in \P_{m, \Omega}$ with $\underline{\pi}_\Omega \circ \underline{p}_\Omega(B'C') = \underline{p}_\Omega(X^*)$. 
\end{theorem}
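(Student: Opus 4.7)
The plan is to combine the polynomial encoding of the ``equal multiset on $\omega_j$'' condition via power sums with the top-$r$-block normalization used in Theorem~\ref{thm:BC}, then to upgrade genericity so the normalization applies to every unlabeled completion simultaneously. The proof splits into two directions---root $\Rightarrow$ unlabeled completion, and unlabeled completion $\Rightarrow$ root---with the genericity work concentrated in the second direction.

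First I would dispatch the root-to-completion direction, which is essentially Newton's identities. Given a root $(B',C')$, set $X' = B'C'$. For each $j \in [n]$ the equations indexed by $j$ assert that the first $\#\omega_j$ power sums of the two lists $\{X'_{ij}\}_{i \in \omega_j}$ and $\{\tilde{x}_{ij}\}_{i \in \omega_j}$, each of length $\#\omega_j$, agree; Newton's identities then force the corresponding elementary symmetric polynomials to agree, and hence the multisets to coincide. Choosing $\Pi_j \in \P_{m,\omega_j}$ matching the two lists, composing with $\underline{\tilde{\pi}}_\Omega^{-1}$, and assembling into $\underline{\pi}_\Omega \in \P_{m,\Omega}$ produces a tuple with $\underline{\pi}_\Omega \circ \underline{p}_\Omega(X') = \underline{p}_\Omega(X^*)$, proving the last sentence of the theorem and placing $X'$ inside \eqref{eq:unlabeled-completions}.

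For the converse, let $X$ be any element of \eqref{eq:unlabeled-completions}. Then $\{x_{ij}\}_{i \in \omega_j}$ and $\{\tilde{x}_{ij}\}_{i \in \omega_j}$ coincide as multisets for every $j$, so $q_{\ell,j}(X)=0$ holds automatically for all $\ell \in [\#\omega_j]$. To factor $X = B'C'$ with the prescribed identity block on $B'$, I need $X$ to lie in the Zariski-open locus $T \subset \M$ of rank-$r$ matrices whose column space meets $\{v \in \RR^m : v_1 = \cdots = v_r = 0\}$ only at the origin; on $T$ the factorization is unique and visibly yields a root $(B',C')$. The heart of the argument is therefore to carve out a Zariski-open dense $\U \subset \M$ on which \emph{every} unlabeled completion of $\tilde{X}$ lies in $T$.

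To construct $\U$, I would start with the open set $\U_0$ from Theorem~\ref{thm:main-UMC} and, for each $\underline{\pi}_\Omega \in \P_{m,\Omega}$, introduce the incidence variety $\Gamma_{\underline{\pi}_\Omega} = \{(X^*,X) \in \M \times \M : \underline{\pi}_\Omega \circ \underline{p}_\Omega(X) = \underline{p}_\Omega(X^*)\}$. Its first projection $\pi_1$ is generically finite over $\U_0$ by Theorem~\ref{thm:main-UMC}, so the image $E_{\underline{\pi}_\Omega} := \pi_1\bigl(\Gamma_{\underline{\pi}_\Omega} \cap (\M \times (\M \setminus T))\bigr)$ is a constructible subset of $\M$; provided its closure is a proper subvariety, removing $\bigcup_{\underline{\pi}_\Omega \in \P_{m,\Omega}} \overline{E}_{\underline{\pi}_\Omega}$ from $\U_0 \cap T$ yields the desired $\U$. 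The main obstacle is verifying this properness, which I would do by producing for each $\underline{\pi}_\Omega$ a single witness $X^* \in \M$ whose $\underline{\pi}_\Omega$-branch completion lies in $T$: take a generic $X \in T$ and let $X^*$ be any $\M$-completion of the partial matrix $\underline{\pi}_\Omega \circ \underline{p}_\Omega(X) \in \RR^{\Omega}$, whose existence is guaranteed by the relaxed-SLMF hypothesis. Combining this genericity with the two directions above gives the asserted bijection on $\U$.
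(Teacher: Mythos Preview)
Your two-direction structure and the use of power-sum/Newton identities for the root $\Rightarrow$ completion direction are correct and match what the paper has in mind (the paper omits this proof, remarking only that it parallels that of Theorem~\ref{thm:BC}). The real work is exactly where you locate it: carving out an open $\U$ on which every unlabeled completion of $\tilde{X}$ lands in the locus $T$ where the normalized $BC$-factorization exists.

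The gap is in your properness argument for $\overline{E_{\underline{\pi}_\Omega}}$. Producing a pair $(X^*,X)$ with $X \in T$ and $\underline{\pi}_\Omega \circ \underline{p}_\Omega(X) = \underline{p}_\Omega(X^*)$ does \emph{not} show $X^* \notin E_{\underline{\pi}_\Omega}$: membership in $E_{\underline{\pi}_\Omega}$ only requires that \emph{some} $X' \notin T$ satisfy the same relation, and your construction does nothing to rule that out. Moreover, even a genuine witness $X^* \notin E_{\underline{\pi}_\Omega}$ would not suffice, since a constructible set missing a single point can still be Zariski-dense in an irreducible variety. What is actually needed is a dimension count. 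Writing out the definition gives
\[
E_{\underline{\pi}_\Omega} \;=\; \underline{p}_\Omega^{-1}\Big(\underline{\pi}_\Omega\big(\underline{p}_\Omega(\M \setminus T)\big)\Big),
\]
and since $\M \setminus T$ is a proper closed subvariety of the irreducible $\M$, one has $\dim \overline{\underline{\pi}_\Omega(\underline{p}_\Omega(\M\setminus T))} \le \dim(\M\setminus T) < \dim\M$. Because $\underline{p}_\Omega$ is generically finite, its image has closure of dimension $\dim\M$ and therefore cannot be contained in this smaller closed set; hence the preimage is a proper closed subset of $\M$ containing $\overline{E_{\underline{\pi}_\Omega}}$. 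Equivalently, and closer to the paper's own machinery, one may rerun the construction of the $\P_{m,\Omega}$-invariant open set $V$ in the proof of Theorem~\ref{thm:main-UMC} after first excising $\overline{\underline{p}_\Omega(\M\setminus T)}$ from $V_0$; the resulting $\U = \underline{p}_\Omega^{-1}(V')$ then has all relevant fibers finite and contained in $T$, bypassing the incidence-variety apparatus entirely.
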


\begin{remark}\label{remark:automorphism}
	By inspecting the proof of Theorem \ref{thm:main-UMC} one sees that the effect of the permutations manifests itself only through the fact that $\P_{m, \Omega}$ is a finite group of automorphisms of $\RR^\Omega$. Hence, the proof and the statement of Theorem \ref{thm:main-UMC} remain unchanged if one replaces $\P_{m, \Omega}$ by any finite group of automorphisms of $\RR^\Omega$. 
	What will change in Theorem \ref{thm:BC-UMC}, is that one now needs to use polynomials that are invariant to the action of the specific group. Indeed, for permutations these are the symmetric polynomials. 
\end{remark}

\subsubsection{UMC with Dominant Permutations}\label{section:UMC-dominant-theory}

In Section \ref{section:UPCA-dominant-theory} we discussed UPCA under the dominant permutation assumption. In that scenario, inliers are the columns of $\tilde{X}$ that lie in the (shuffled) ground-truth subspace $\Pi S^*$, while outliers arise as the columns that are shuffled by permutations other than $\Pi$ and thus driven away from $\Pi S^*$ (Figure \ref{fig:upca-dom}); namely, inliers and outliers in UPCA are partitioned as per
\begin{align}\label{eq:UPCA-partition}
    \tilde{X}_{\text{in}} := \{\tilde{x}_j \, | \, \Pi^*_{j} = \Pi \}, \, \, \, \, \, \, 
	\tilde{X}_{\text{out}} :=  \{\tilde{x}_j \, | \, \Pi^*_{j} \ne \Pi \}. 
\end{align} We now extend that scenario to the UMC setting. 

\begin{figure}[!htbp]
\centering
\subfloat[Ground-truth $X^*$]{$
\begin{pNiceMatrix} 
x_{11} & x_{12} & x_{13} & x_{14} \\
x_{21} & x_{22} & x_{23} & x_{24} \\
x_{31} & x_{32} & x_{33} & x_{34} \\
x_{41} & x_{42} & x_{43} & x_{44} 
\end{pNiceMatrix}$
}
\hspace{1cm}
\subfloat[UPCA data matrix with a dominant permutation]{$
\begin{pNiceMatrix} 
	x_{41} & x_{32} & x_{13} & x_{44} \\
x_{11} & x_{22} & x_{43} & x_{24} \\
x_{31} & x_{12} & x_{33} & x_{14} \\
x_{21} & * & x_{23} & x_{34} 
	\CodeAfter
	\tikz \node [draw, rounded corners, fit = (1-3) (4-3)] {};
\end{pNiceMatrix}$
\label{fig:upca-dom}
}
\hspace{1cm}
\subfloat[UMC data matrix $\tilde{X}$ with a dominant permutation]{$
\begin{pNiceMatrix} 
x_{41} & x_{32} & * & x_{44} \\
x_{11} & * & x_{43} & * \\
* & x_{12} & x_{33} & x_{14} \\
x_{21} & * & x_{23} & x_{34} 
\CodeAfter
  \tikz \node [draw, rounded corners, fit = (1-3) (4-3)] {};
\end{pNiceMatrix}$
\label{fig:umc-dom}
}
\caption{Similar to Figure \ref{fig:upca-umc-example}, yet the difference is as follows. In Figure \ref{fig:upca-dom} and \ref{fig:umc-dom}, columns $1,2,4$ of $\tilde{X}$ have been shuffled by the same permutation (i.e., the dominant permutation);  column $3$ (circled) is shuffled by a different permutation and thus treated as an outlier. }
\label{fig:outliers-types}
\end{figure}

Generalizing \eqref{eq:UPCA-partition}, we can define the partition for the UMC data $\tilde{X}$:
\begin{align}\label{eq:UMC-partition}
	\tilde{X}_{\text{in}} := \{\tilde{x}_j \, | \, P_{\omega_j}\Pi^*_j = P_{\omega_j}\Pi \}, \, \, \, \, \, \, 
	\tilde{X}_{\text{out}} :=  \{\tilde{x}_j \, | \, P_{\omega_j}\Pi^*_j \ne P_{\omega_j}\Pi \}  
\end{align} 
Given the inherent ambiguity of UPCA, we can assume the dominant permutation is the identity $I_{m}$ as in Section \ref{section:UPCA-dominant-theory}  (this is equivalent to replacing the ground-truth subpsace $S^*$ by $\Pi^* S^*$, a harmless assumption for theoretical purposes, and often for practical ones as well). Then, the dominant identity permutation assumption entails sufficiently many $j$'s for which $\Pi^*_j = I_{m}$, and also that $\tilde{X}_{\text{in}}$ contains sufficiently many data points $\tilde{x}_j$'s that would span $S^*$ if correctly completed; we can thus naturally regard every point of $\tilde{X}_{\text{in}} $ as an inlier. However, pathological scenarios would arise if completing any point of $\tilde{X}_{\text{out}}$ in whatever way yielded a point in $S^*$. Fortunately, this pathological situation can in general be ruled out:
\begin{proposition}\label{prop:UMC-outlier}
	For a generic $X^*$ in $\M_r$, and for $\tilde{x}= P_{\omega} \Pi^* x^*$ a column in $\tilde{X}$ with $\# \omega \ge r+1$ and $\,P_{\omega}\Pi^* \ne P_{\omega}I_{m}$, any completion of $\tilde{x}$ is away from $S^*$, i.e. $P_{\omega} y \ne  \tilde{x}$, $\forall y\in S^*$. 
\end{proposition}
For a generic matrix $X^*\in \M_r$ with $\# \omega_j \geq r+1$ ($\forall j$), it is now safe to treat every column of $\tilde{X}_{\text{out}}$ as an outlier, and indeed \eqref{eq:UMC-partition} gives a well-defined partition of inliers $\tilde{X}_{\text{in}}$ and outliers $\tilde{X}_{\text{out}}$. This extends the insight of UPCA with the dominant identity permutation assumption, and makes it possible to estimate $S^*$ via matrix completion methods that are robust to outliers. Precise algorithmic solutions leveraging such insights fall right into Section \ref{section:UMC pipeline}.

\section{Algorithms}\label{section:algorithms}
In this section, we study the problems of UPCA and UMC under the dominant identity permutation assumption. We propose two-stage algorithmic pipelines for both problems:
\begin{itemize}
    \item For UPCA, the first stage computes a subspace $\hat{S}$ from $\tilde{X}$ via outlier-robust PCA methods. The second stage applies unlabeled sensing methods to $\tilde{X}$ (and $\hat{S}$) in a column-wise manner. Figure \ref{fig:upca_pipeline} gives a diagram, and Section \ref{section:two-stage UPCA} gives full details.
    \item The UMC pipeline parallels and extends that of UPCA. The first stage computes $\hat{S}$ from $\tilde{X}$ via matrix completion with column outliers. The second stage first takes $P_{\omega_{j}}\hat{S}$ and $P_{\omega_{j}}\tilde{x}_{j}$ as inputs, and outputs an estimate $P_{\omega_{j}}\hat{x}_{j}$ via solving the problem of unlabeled sensing with missing entries. After completing the missing entries in $P_{\omega_{j}}\hat{x}_{j}$ by a least-square method, we get the estimate $\hat{x}_{j}$. See Figure \ref{fig:umc_pipeline} and Section \ref{section:UMC pipeline}.
\end{itemize}

\begin{figure}[htp] 
	\centering
    \subfloat[UPCA]{\label{fig:upca_pipeline}
	\begin{tikzpicture}[every text node part/.style={align=center}, node distance=3cm]
		\node (Xtilde){$\tilde{X}$};
		\node (rpca) [rectangle, draw, right of=Xtilde] {fit a linear subspace $\hat{S}$ \\ via \textit{outlier-robust PCA}};	
		\node (Shat) [right of=rpca] {$\hat{S}$};
		\node (us) [rectangle, draw, right of=Shat] {apply \textit{unlabeled sensing}\\ to $\tilde{x}_{j}$ for each column $j$};
		\node (Xhat) [right of=us] {$\hat{X}$};
		
		\draw [->] (Xtilde.east) .. controls +(right:0mm) and +(left:0mm) .. (rpca.west);
		\draw [->] (rpca.east) ..  controls +(right:0mm) and +(left:0mm)  .. (Shat.west);
		
		\draw [->] (Shat.east) ..  controls +(right:0mm) and +(left:0mm)  .. (us.west);
		\draw [->] (us.east) ..  controls +(right:0mm) and +(left:0mm)  .. (Xhat.west);
	\end{tikzpicture}
    }
    \\
    \subfloat[UMC]{\label{fig:umc_pipeline}
    \begin{tikzpicture}[every text node part/.style={align=center}, node distance=3cm]
		\node (Xtilde){$\tilde{X}$};
		\node (rpca) [rectangle, draw, right of=Xtilde] {fit a linear subspace $\hat{S}$ \\ via \textit{matrix  completion} \\ \textit{with column outliers}};	
		\node (Shat) [right of=rpca] {$\hat{S}$};
		\node (us) [rectangle, draw, right of=Shat] {apply \textit{unlabeled sensing} \\ to $P_{\omega_{j}}\tilde{x}_{j}$ for each column $j$};
		\node (Xhat) [right of=us] {$\hat{X}$};
		
		\draw [->] (Xtilde.east) .. controls +(right:0mm) and +(left:0mm) .. (rpca.west);
		\draw [->] (rpca.east) ..  controls +(right:0mm) and +(left:0mm)  .. (Shat.west);
		
		\draw [->] (Shat.east) ..  controls +(right:0mm) and +(left:0mm)  .. (us.west);
		\draw [->] (us.east) ..  controls +(right:0mm) and +(left:0mm)  .. (Xhat.west);
	\end{tikzpicture}
	}
	\caption{The proposed algorithmic pipelines for UPCA and UMC. }
    \label{fig:pipelines}
\end{figure}

\subsection{Two-Stage Algorithmic Pipeline for UPCA} \label{section:two-stage UPCA}
We saw in the previous section that the UPCA problem \eqref{eq:UPCA} is well-defined (Theorem \ref{thm:main}) and in principle solvable by a polynomial system of equations (Theorem \ref{thm:BC}). However, this polynomial system is at the moment intractable to solve even for moderate dimensions. On the other hand, Theorem \ref{thm:dominant} suggests the following practical two-stage algorithmic pipeline for the case where there is a dominant permutation, which we will take to be the identity. 

\paragraph{Stage-I of UPCA}\label{section:para-pipe-I-upca} The existence of a dominant identity permutation enables estimating the underlying subspace, which is the task of Stage-I in the proposed algorithmic pipeline. Hence, at Stage-I a PCA method with robustness to outliers is employed to produce an estimate $\hat{S}$ of $S^*$ from $\tilde{X}$. Such robust PCA methods include OP \citep{xu2012robust}, Self-Repr \citep{soltanolkotabi2012geometric,you2017provable}, CoP \citep{rahmani2017coherence}, and DPCP \citep{tsakiris2018dual,zhu2018dual,lerman2018fast}, as mentioned in Section \ref{section:rpca-review}.

\paragraph{Stage-II of UPCA}\label{section:para-pipe-II-upca} Once equipped with a robust estimate of $S^*$, the aim of Stage-II is to estimate $X^*$, which can be achieved by employing methods for unlabeled sensing. These methods take a point $\tilde{x}_{j}$ of $\tilde{X}$, identified as an outlier with respect to the subspace $\hat{S}$, and return an estimate $\hat{x}_j$ of $x_j^*$ by (directly or indirectly) solving the problem
\begin{align}
\min_{\Pi \in \P_m, \, \hat{x}_{j} \in \hat{S}} \|\tilde{x}_{j} - \Pi \hat{x}_{j} \|_2 \label{eq:SLR}
\end{align}
Hence, at Stage-II of the pipeline, one feeds $\hat{S}$ and $\tilde{X}$ to an unlabeled sensing method \citep{Slawski-JoS19,Slawski-JCGS2021,Tsakiris-TIT2020,Peng-SPL2020,Mazumder-MP2023,Onaran-arXiv2022}, which operates point by point, returning for every $\tilde{x}_j$ an estimate $\hat{x}_j$. Here one may choose to threshold the $\tilde{x}_j$'s based on their distance to $\hat{S}$ and apply unlabeled sensing on the outliers only. Alternatively, if extra computational power is available for dispensing with choosing a threshold, one may apply unlabeled sensing on every $\tilde{x}_j$; we follow this approach in the experiments for UPCA. 

This proposed two-stage method is summarized in Algorithm \ref{algo:UPCA}. 
\begin{algorithm}[!htbp]
	\caption{Two-stage Algorithmic Pipeline for UPCA } \label{algo:UPCA}
	\begin{algorithmic}[1]
		\State {\bf{Input:}}{ observed data matrix $\tilde{X}$, rank $r$}
		\State estimate $\hat{S}$ of $S^*$ $\gets$ outlier-robust PCA on $\tilde{X}$ \Comment{Stage-I$\,\, $}
		\For{$j=1,\dots, n$} \Comment{Stage-II}
		\State estimate $\hat{x}_j$ of $x_j^*$ $\gets$ unlabeled sensing (\ref{eq:SLR}) on $(\tilde{x}_j, \hat{S})$ 
		\EndFor
		\State \textbf{return} estimate $\hat{X} = [\hat{x}_1, \dots, \hat{x}_n]$ of $X^*$
	\end{algorithmic}
\end{algorithm}

\paragraph{A New Method For Unlabeled Sensing: LSRF} \label{section:LSRF}
Inasmuch as there are very few scalable unlabeled sensing methods, we here propose a simple but comparatively efficient alternative named \emph{Least-Squares with Recursive Filtration} (LSRF), see Algorithm \ref{algo:LSRF}. This method is parameter-free and alternates between ordinary least-squares and a dimensionality reduction step that removes the coordinate of the ambient space on which the residual error attains its maximal value, until $r$ coordinates are left. The complexity of LSRE is $\mathcal{O}(m^2 r^2)$. 

\begin{algorithm}[!htbp]
	\caption{Unlabeled Sensing via Least-Squares with Recursive Filtration (LSRF)}\label{algo:LSRF}
	\begin{algorithmic}[1]
		\State {\bf{Input:}}{ permuted point $\tilde{x}_{j}$, basis $B^*$ of subspace $S^*$}
		\State $v^{(0)} \gets \tilde{x}_j, A^{(0)}  \gets B^*$
		\For{$k= 1, \dots, m-r$}
		\State $c \gets {A^{(k-1)} }^{\dagger} v^{(k-1)} $
		\State $i' \gets \argmax_{i} |v^{(k-1)}_{i}  - A^{(k-1)}_{i} c|$
		\State remove the $i'$th entry of $v^{(k-1)}$ to get $v^{(k)}$
		\State remove the $i'$th row of $A^{(k-1)}$ to get $A^{(k)}$
		\EndFor
		\State \textbf{return} estimate $\hat{x}_{j}=A^{(m-r)} {A^{(m-r)} }^{\dagger} v^{(m-r)}$ for $x_j^*$
	\end{algorithmic}
\end{algorithm}

\subsection{Two-Stage Algorithmic Pipeline for UMC} \label{section:UMC pipeline} 

As in UPCA, we propose a two-stage pipeline for UMC that can be effective under the dominant permutation assumption. We detail our algorithmic pipeline next. 

\paragraph{Stage-I of UMC} Stage-I of UMC parallels that of UPCA, with the same goal of estimating the ground-truth subspace $S^*$, yet with the additional challenge that the data matrix now has missing entries (Figure \ref{fig:upca-umc-example}). 
{Recall that, under the dominant permutation assumption, we can treat the columns permuted by the dominant permutation as inliers and others outliers (Figure \ref{fig:outliers-types})}. As such, Stage-I amounts to solving the problem of matrix completion with column outliers (reviewed in Section \ref{section:mco-review}). To do so, a direct solution is employing the convex program of \citet{chen2015matrix}, called MCO, which estimates $S^*$ in a way that is robust to column outliers and missing entries.

However, MCO comes with two issues that might hinder its accuracy. First, it aims to complete inliers and detect outliers \textit{simultaneously}; doing so can be very challenging  and thus error-prone. Second, its \textit{convex} program can not leverage the inherent non-convexity of the problem.  To alleviate these issues, we build upon existing \textit{non-convex} outlier-robust PCA procedures and propose an alternative to MCO. The alternative proposal detects inliers, completes inliers, and estimates the ground-truth subspace $S^*$ \textit{in cascade}: 
\begin{enumerate}[parsep=0pt] 
    \item (\textit{Detect Inliers}) We first complete all missing entries of $\tilde{X}$ by the value $0$, thereby obtaining a complete matrix $\tilde{X}_0$; such matrix $\tilde{X}_0$ is sometimes called \textit{zero-filled} data matrix \citep{Yang-ICML2015,Tsakiris-ICML2018}. Similarly to \eqref{eq:UMC-partition}, $\tilde{X}_0$ can be partitioned into \textit{zero-filled inliers} and \textit{zero-filled outliers}. Then we proceed as if the zero-filled inliers were correct completions of UMC inliers $\tilde{X}_{\text{in}}$ that span the ground-truth space $S^*$, and run existing outlier-robust PCA methods on $\tilde{X}_0$. Since the missing entries of $\tilde{X}$ are heuristically filled by zeros, such a subspace estimate might be inaccurate, away from $S^*$, and thus inadequate for the subsequent recovery task. On the other hand, if we are further given an \textit{inlier threshold} as a hyper-parameter, then we can obtain an estimated partition of inliers and outliers. In particular, an incomplete point $\tilde{x}_j$ is classified as an inlier if the distance between its zero-filled version and the estimated subspace is smaller than the inlier threshold; otherwise it is an outlier. Hence, we declare a set of inliers as determined by the partition, and will use these estimated inliers for the sequel. 
    \item (\textit{Complete Inliers}) The detected inliers $\tilde{x}_j$'s in the previous step are in fact incomplete, and at this point, we will no longer rely on their zero-filled versions; instead, we aim to find their authentic completions $(x^*_j)$'s, which are expected to span $S^*$ if the detected inliers are indeed inliers in the sense of \eqref{eq:UMC-partition}. In other words, we are now confronted with a low-rank matrix completion task. Therefore, step 2 is to complete the detected inliers using standard matrix completion algorithms.
    \item (\textit{Estimate $S^*$}) Finally, given an estimate for the ground-truth rank $r$, we can now simply perform a singular value decomposition on the matrix of completed inliers, and obtain the final estimate $\hat{S}$ of $S^*$.
\end{enumerate}
The above routine in cascade can be upgraded into a block coordinate descent method \citep{Peng-arXiv2023b}: Alternate among inlier detection and completion and subspace estimation. Moreover, convergence guarantees of \citep{Peng-arXiv2023b} might be applied here. That said, we do not pursue this idea of block coordinate descent here, as the above routine already shows satisfactory recovery performance (see Section \ref{subsection:experiments-UMC}).

\paragraph{Stage-II of UMC} After Stage-I, we have obtained an estimate $\hat{S}$ of the ground-truth subspace $S^*$ and completed inliers. Since each outlier $\tilde{x}_j$ is a point $x^*_j$ in $S^*$ except being permuted and having some entries missing, there is a chance of recovering a good estimate $\hat{x}_j$ of $x_j$ from $\hat{S}$ and $\tilde{x}_j$. 
In such cases, we can first restore the permutation of each column via solving \eqref{eq:SLR} on observed entries {(using the subspace $P_{\omega_j} \hat{S}$)} and then complete the missing entries of that column via a least-squares computation. 
We summarize our approach in Algorithm \ref{algo:UMC-proj-then-perm}. 
\begin{algorithm}[!htbp]
	\caption{Two-stage Algorithmic Pipeline for UMC} \label{algo:UMC-proj-then-perm}
	\begin{algorithmic}[1]
		\State {\bf{Input:}}{ observed data matrix $\tilde{X}$, rank $r$}
		\State $\tilde{X}_0 \gets$ fill missing entries in $\tilde{X}$ with zeros  
		\State estimate $\hat{S}$ of $S^*$ $\gets$ matrix completion with column outliers on $\tilde{X}_0$  \Comment{Stage-I$\,\, $}
		\State $\hat{B}$ $\gets$ basis of $\hat{S}$
		\For{$j=1,\dots, n$} \Comment{Stage-II}
		\State estimate $P_{\omega_{j}} \hat{\Pi}_{j}$  of $P_{\omega_{j}} \Pi_j^*$ $\gets$ unlabeled sensing \eqref{eq:SLR} on $(P_{\omega_{j}} \tilde{x}_j, P_{\omega_{j}} \hat{S})$ 
		\State estimate coefficient $\hat{c}_{j}$ of $x^*_j$ in basis $\hat{B}$ $\gets$  least squares on $(P_{\omega_{j}}  \hat{B}, P_{\omega_{j}} \hat{\Pi}_{j} \tilde{x}_j)$ 
		\State $\hat{x}_j$ $\gets$ $\hat{B} \cdot \hat{c}_{j}$ 
		\EndFor
		\State \textbf{return} estimate $\hat{X} = [\hat{x}_1, \dots, \hat{x}_n]$ of $X^*$
	\end{algorithmic}
\end{algorithm}

\color{black}

\section{Experimental Evaluation}\label{section:experiments}
Here we perform synthetic and real data experiments to evaluate the proposed algorithmic pipelines for UPCA (Section \ref{subsection:experiments-UPCA}) and UMC (Section \ref{subsection:experiments-UMC}). We use two metrics for performance evaluation. The first is the largest principal angle $\theta_{\max}(S^*,\hat{S})$ between the estimated subspace $\hat{S}$ and ground-truth $S^*$, and this is used for Stage-I to evaluate subspace learning accuracy. The second metric is the relative estimation error $\frac{\|\hat{X} - X^*\|_{F}}{\|X^*\|_{F}}$ between the estimated data matrix $\hat{X}$ and the ground-truth $X^*$, which quantifies the final performance of our algorithmic pipeline. For both metrics, smaller values imply better performance.

\color{black}
\subsection{UPCA Experiments}\label{subsection:experiments-UPCA}
We begin by assessing the performance of Stage-I of the pipeline in Section \ref{subsection:RPCA-experiments}. This entails understanding how different PCA methods with robustness to outliers behave when the outliers are induced by permutations, as in Theorem \ref{thm:dominant}.
Next in section \ref{section:UPCA synthetic}, we evaluate the overall UPCA pipeline of Algorithm \ref{algo:UPCA} on synthetic data with added spherical noise. 

\subsubsection{Stage-I of UPCA}\label{subsection:RPCA-experiments}

To understand how different PCA methods with robustness to outliers behave when the outliers are induced by permutations, we access the performance of Stage-I of the pipeline in section \ref{subsection:RPCA-experiments}. We consider Self-Expr \citep{you2017provable,soltanolkotabi2012geometric}, CoP \citep{rahmani2017coherence}, OP \citep{xu2012robust}, and DPCP \citep{tsakiris2018dual,lerman2018fast}; these methods are reviewed in Section \ref{section:rpca-review}. 

We fix $m=50$ and $n=500$. With $\dim S^*$ taking values $r=1:1:49$, $S^*$ is sampled uniformly at random from the Grassmannian $\Gr(r,m)$. Then $n$ points $x_j^*$ are sampled uniformly at random from the intersection of $S^*$ with the unit sphere of $\RR^m$ to yield $X^*$. Denote by $n_{\inn}$ the number of inliers and $n_{\out}$ the number of outliers, with $n_{\inn}+n_{\out}=n$. We consider outlier ratios $n_{\out}/n = 0.1 : 0.1 : 0.9$. For a fixed outlier ratio, we set $\tilde{\Pi}_j$ to the identity for $j \in [n_{\inn}]$ and determine the $\tilde{\Pi}_j$'s for $j > n_{\inn}$ as follows. An important parameter in the design of a permutation $\Pi$ is its sparsity level $\alpha \in [0,1]$. This is the ratio of coordinates that are moved by $\Pi$. To obtain $\tilde{\Pi}_j$, for a fixed $\alpha$, for each $j > n_{\inn}$, we randomly choose $\alpha m$ coordinates and subsequently a random permutation on those coordinates. We consider permutation sparsity levels $\alpha=1,0.6,0.2,0.1$.

In Self-Repr and CoP, $\hat{S}$ is taken to be the subspace spanned by the top $r$ $\tilde{x}_j$'s with largest inlier scores. We use the Iteratively-Reweighed-Least-Squares method proposed by \cite{tsakiris2017hyperplane} and \cite{lerman2018fast} for solving the DPCP problem. The output subspace $\hat{S}$ of OP is obtained as the $r$th principal component subspace of the decomposed low-rank matrix. 
For Self-Expr we use $\lambda = 0.95$, $\alpha = 10$ and $T=1000$, see section 5 in \cite{you2017provable}. For DPCP we use $T_{\max} = 1000$, $\epsilon = 10^{-9}$ and $\delta = 10^{-15}$, see Algorithm 2 in \cite{tsakiris2018dual}. 

Finally, OP uses $\lambda = 0.5$ and $\tau = 1$  in Algorithm 1 of \cite{xu2012robust}.

Figure \ref{fig:PCA} depicts the \emph{outlier-ratio versus rank} phase transitions, where to calibrate the analysis with what we know about these methods from prior work, we have included in the top row of the figure the phase transitions for outliers randomly chosen from the unit sphere. By reading that top row we recall: 
i) DPCP has overall the best performance across all ranks and all outlier ratios, ii) OP identifies correctly $S^*$ only in the low rank low outlier-ratio regime, as expected from its conceptual formulation, and iii) CoP and Self-Expr, even though low-rank methods in spirit, they have accuracy similar to each other and considerably better than OP. We also note that CoP is the fastest method requiring $0.51sec$ for the computation of a single phase transition plot for each trial (i.e., average time for running all settings of outlier ratio $0.1:01:0.9$ and rank $1:1:49$), Self-Expr is the slowest with $752sec$ and DPCP and OP take $1.31sec$ and $5.62sec$, respectively\footnote{Experiments are run on an Intel(R) i7-8700K, 3.7 GHz, 16GB machine. }.

\begin{figure}[!h]
\centering
\begin{tabular}{c c c c c c}
\centering
    \raisebox{\dimexpr 1cm-\height}{\quad} & {Self-Repr} & \hspace{-6mm} {CoP} & \hspace{-6mm} {OP} &  \hspace{-6mm} {DPCP} &  {\ } 
    \\  \vspace{-5mm}
    \raisebox{\dimexpr 1.4cm-\height}{\quad} & 
    \subfloat{
         \includegraphics[width=1in]{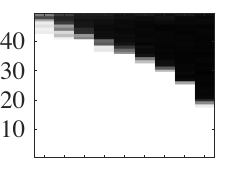}
         \label{fig:rand_seo}} & \hspace{-6mm}
    \subfloat{
         \includegraphics[width=1in]{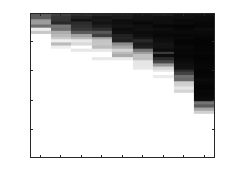}
         \label{fig:rand_cop}} & \hspace{-6mm}
    \subfloat{
         \includegraphics[width=1in]{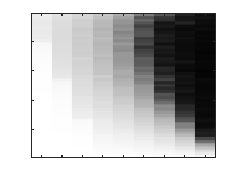}
         \label{fig:rand_l21}} & \hspace{-6mm}
    \subfloat{%
         \includegraphics[width=1in]{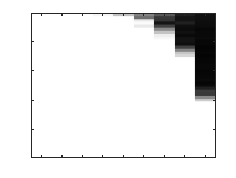}
         \label{fig:rand_irls}} & \hspace{-6mm}
     \raisebox{\dimexpr 1.5cm-\height}{\makebox[2\height]{\shortstack{random \\ outliers}}}
     \\
     \vspace{-5mm}
     \raisebox{\dimexpr 1.4cm-\height}{\quad} &
     \subfloat{%
         \includegraphics[width=1in]{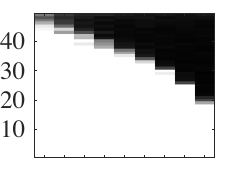}
         \label{fig:full_seo}} & \hspace{-6mm}
    \subfloat{%
         \includegraphics[width=1in]{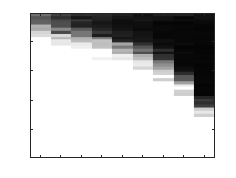}
         \label{fig:full_cop}} & \hspace{-6mm}
     \subfloat{%
         \includegraphics[width=1in]{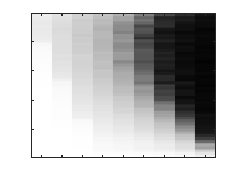}
         \label{fig:full_l21}} & \hspace{-6mm}
    \subfloat{%
         \includegraphics[width=1in]{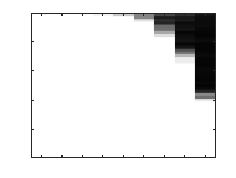}
         \label{fig:full_irls}} & \hspace{-6mm}
    \raisebox{\dimexpr 1.4cm-\height}{$\alpha=1.0$}
	\\
    \vspace{-5mm}
     \raisebox{\dimexpr 1.0cm-\height}{$r$} &
    \subfloat{%
         \includegraphics[width=1in]{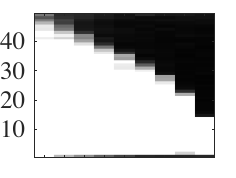}
         \label{fig:60_seo}} & \hspace{-6mm}
     \subfloat{%
         \includegraphics[width=1in]{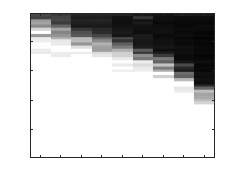}
         \label{fig:60_cop}} & \hspace{-6mm}
     \subfloat{%
         \includegraphics[width=1in]{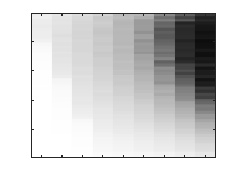}
         \label{fig:60_l21}} & \hspace{-6mm}
    \subfloat{%
         \includegraphics[width=1in]{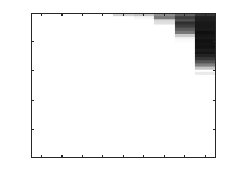}
         \label{fig:60_irls}} & \hspace{-6mm}
    \raisebox{\dimexpr 1.4cm-\height}{$\alpha=0.6$}
    \\
    \vspace{-5mm}
     \raisebox{\dimexpr 1.4cm-\height}{\quad} & 
    \subfloat{%
         \includegraphics[width=1in]{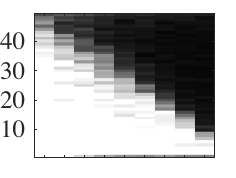}
         \label{fig:20_seo}} & \hspace{-6mm}
     \subfloat{%
         \includegraphics[width=1in]{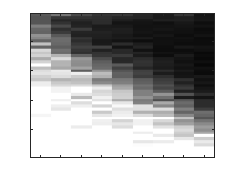}
         \label{fig:20_cop}} & \hspace{-6mm}
     \subfloat{%
         \includegraphics[width=1in]{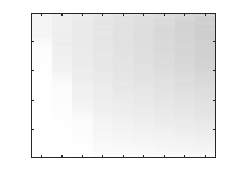}
         \label{fig:20_l21}} & \hspace{-6mm}
    \subfloat{%
         \includegraphics[width=1in]{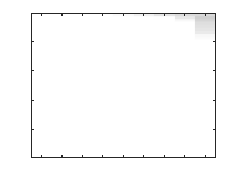}
         \label{fig:20_irls}} & \hspace{-6mm}
     \raisebox{\dimexpr 1.4cm-\height}{$\alpha=0.2$}
     \\
     \raisebox{\dimexpr 1.4cm-\height}{\quad} & 
     \subfloat{%
         \includegraphics[width=1in]{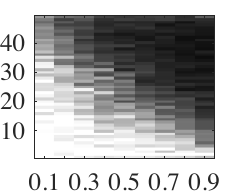}
         \label{fig:10_seo}} & \hspace{-6mm}
     \subfloat{%
         \includegraphics[width=1in]{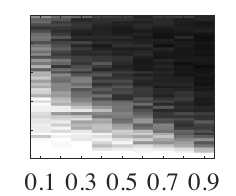}
         \label{fig:10_cop}} & \hspace{-6mm}
     \subfloat{%
         \includegraphics[width=1in]{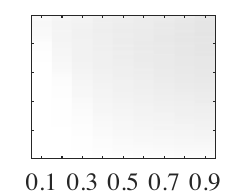}
         \label{fig:10_l21}} & \hspace{-6mm}
    \subfloat{%
         \includegraphics[width=1in]{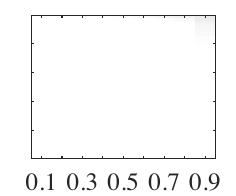}
         \label{fig:10_irls}} & \hspace{-6mm}
    \raisebox{\dimexpr 1.4cm-\height}{$\alpha=0.1$}
    \end{tabular}\\ 
{outlier ratio} 
	\caption{$\theta_{\max}(S^*,\hat{S})$ in UPCA Stage-I: outlier ratio vs. rank phase transitions for various PCA methods with robustness to outliers.}
	\label{fig:PCA}
\end{figure}

Now let us look at what happens for permutation-induced outliers. For $\alpha=1$, where the permutations move all the coordinates of the points they are corrupting, we see that the phase transition plots are practically the same as for random outliers. In other words, obtaining the outliers by randomly permuting all coordinates of inlier points, with different permutations for different outliers, seems to be yielding an outlier set as generic for the task of subspace learning as sampling the outliers randomly from the unit sphere. A second interesting phenomenon is observed when the permutation ratio is decreased to $\alpha= 0.1$. In that regime the methods exhibit two very different trends. On one hand, CoP and Self-Expr appear to break down, which is expected, because as the permutations become more sparse, the outlier points become more coherent with the rest of the data set. On the other hand, the accuracy of DPCP and OP improves for sparser permutations; a justification for this is that both methods get initialized via the SVD of $\tilde{X}$, which yields a subspace closer to $S^*$ for smaller $\alpha$. For example, the value of principal angles $\theta_{\max}(S^*,\hat{S})$ for Self-Expr, CoP, OP, DPCP, for $\alpha=0.2$, outlier ratio $0.9$ and $r=49$ are $79^\circ, 83^\circ, 14^\circ, 13^\circ$, respectively. As another example, for $\alpha=0.1$, outlier ratio $0.7$ and $r=25$ the value of $\theta_{\max}(S^*,\hat{S})$ is $67^\circ, 80^\circ, 7^\circ, (10^{-6})^\circ$, with the methods ordered as above. Overall, DPCP is consistently outperforming the rest of the methods, justifying it as our primary choice in the next section. An interesting research direction is to analyze the theoretical guarantees of these methods for this specific type of outliers.

\subsubsection{The Full Pipeline of UPCA}\label{section:UPCA synthetic}

Now we evaluate the UPCA pipeline of Algorithm \ref{algo:UPCA} on synthetic data. We keep $m=50$ as before, and add spherical noise with to a fixed SNR of $40$dB. We get the estimate $\hat{S}$ of $S^*$ via DPCP \citep{tsakiris2018dual,lerman2018fast} in Stage-I and apply the unlabeled sensing methods \citep{Tsakiris-TIT2020,Peng-SPL2020,Slawski-JoS19,Slawski-JCGS2021} and Algorithm \ref{algo:LSRF} in Stage-II to get $\hat{X}$ from $\hat{S}$ and $\tilde{X}$. We distinguish between dense and sparse permutations. 

\textbf{Dense Permutations.} We first consider dense permutations, that is $\alpha=1$. This is an extremely challenging case, with the difficulty manifesting itself through the fact that existing methods can only handle small ranks $r$. We consider AIEM and CCV-Min, two state-of-the-art methods mentioned in Section \ref{subsection:us-review}. For AIEM we use a maximum number of $1000$ iterations in the alternating minimization of \eqref{eq:SLR}. For CCV-Min we use a precision of $0.001$, the maximal number of iterations is set to $50$, and the maximum depth to $12$ for $r=3$ and $14$ for $r=4,5$. 

Figure \ref{fig:LRWC-dense} depicts the relative estimation error of $\hat{X}$ for different outlier ratios from $75\%$ ($25$ inliers) to  $94\%$ ($6$ inliers) and ranks $r=3,4,5$. To assess the overall effect of the quality of $\hat{S}$, we use two versions of AIEM and CCV-Min. The first, denoted by AIEM($\hat{S}$) and CCV-Min($\hat{S}$), uses as input the estimated subspace $\hat{S}$, while the second version, AIEM($S^*$) and CCV-Min($S^*$), uses the ground-truth subspace $S^*$. Note that the estimation error of AIEM($S^*$)/CCV-Min($S^*$) is independent of the outlier ratio. On the other hand, the estimation error of AIEM($\hat{S}$)/CCV-Min($\hat{S}$) depends on the outlier ratio through the computation of $\hat{S}$. Indeed, $\hat{S}$ is expected to be closer to $S^*$ for smaller outlier ratios, as we already know from Figure \ref{fig:PCA}. In particular, for up to $75\%$ outliers the estimation error of AIEM($\hat{S}$)/ CCV-Min($\hat{S}$) coincides with that of AIEM($S^*$)/ CCV-Min($S^*$), indicating an accurate estimation of $S^*$. At the other extreme, for $94 \%$ outliers both AIEM($\hat{S}$)/ CCV-Min($\hat{S}$) break down, indicating that the estimation of $\hat{S}$ failed. Finally, note that CCV-Min has at least half order of magnitude smaller estimation error than AIEM. This is due to our specific choice of the branch \& bound CCV-Min parameters which control the trade-off between accuracy and running time; for example, for $r=3$ and $75\%$ outliers, AIEM runs in $42msec$ with $1\%$ error, while CCV-Min needs about $15sec$ to bound $\hat{X}$ $0.42\%$ away from $X^*$. 

For AIEM, we use the customized Gr\"obner basis solvers of \cite{Tsakiris-TIT2020}, developed for $r \le 4$, which solve the polynomial system in milliseconds, and the maximum number iterations in the alternating minimization procedure is $T_{\max}=1000$. For $r=5$, the design of such solvers is an open problem\footnote{The fast solver generator of \cite{Larsson-CVPR2017} is an improved version of the one used by \cite{Tsakiris-TIT2020} for $r=3,4$. However, we found that for $r=5$ it suffers from numerical stability issues.}, thus we use the generic solver Bertini \citep{Bertini}, which runs within a few seconds. For $r \ge 6$ though, AIEM remains as of now practically intractable. 
For CCV-Min the precision is $0.001$, $T_{\max}=50$, and the maximum depth is $12$ for $r=3$ and $14$ for $r=4,5$. For $\ell_1$-RR we use $\lambda = 0.01 \sqrt{\log(n)/n}$ in (13) of \cite{Slawski-JoS19}.

\begin{figure*}[!htbp]
	\centering
	\raisebox{\dimexpr 2.2cm-\height}{{\small $\frac{\|\hat{X}-X^*\|_F}{\|X^*\|_F}$}}
	\subfloat[$r=3$]{%
		\includegraphics[width=0.27\linewidth]{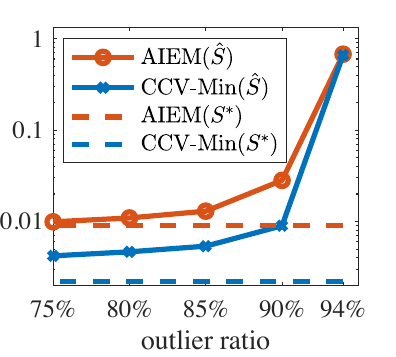}
		\label{fig:r3_10}}
	\subfloat[$r=4$]{%
		\includegraphics[width=0.27\linewidth]{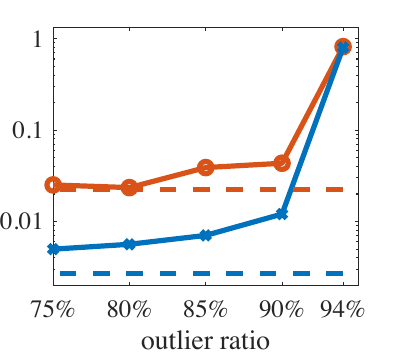}
		\label{fig:r4_10}}
	\subfloat[$r=5$]{%
		\includegraphics[width=0.27\linewidth]{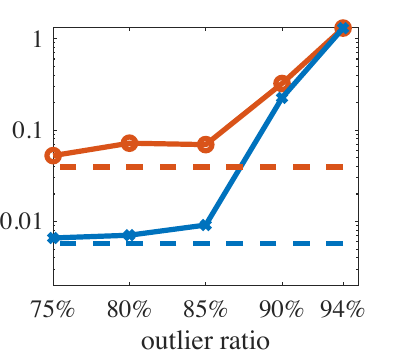}
		\label{fig:r5_10}}
	\caption{UPCA (Algorithm \ref{algo:UPCA}) for dense permutations ($\alpha=1$) with $\hat{S}$ produced by DPCP \citep{tsakiris2018dual,lerman2018fast} at Stage-I and $\hat{X}$ produced by AIEM \citep{Tsakiris-TIT2020} or CCV-Min \citep{Peng-SPL2020} at Stage-II.}
	\label{fig:LRWC-dense}
\end{figure*}

\textbf{Sparse Permutations.} The methods that we saw in the previous section certainly apply in the special case where only a fraction of the coordinates is permuted. However, they are still subject to the same computational limitations that practically require the rank $r$ to be small ($r\le 6$ for AIEM and $r \le 8$ for CCV-Min). On the other hand, the problem of linear regression without correspondences is tractable for a wider range of ranks when the permutations are sparse (small $\alpha$). This important case arises in applications such as record linkage, where domain specific algorithms are only able to guarantee partially correctly matched data. Here we consider three methods, $\ell_1$-RR \citep{Slawski-JoS19}, PL \citep{Slawski-JCGS2021}, and our proposed LSRF (Algorithm \ref{algo:LSRF}).

Figures \ref{fig:rg_10}-\ref{fig:lsr_10} show the relative estimation error of UPCA for $\alpha = 0.1:0.1:0.6$, rank $r=1:1:25$ and outlier ratio fixed to $90\%$, with $\hat{S}$ computed in Stage-I by DPCP \citep{tsakiris2018dual,lerman2018fast} and $\hat{X}$ computed via $\ell_1$-RR, PL, or LSRF from $\tilde{X}$ and $\hat{S}$ in Stage-II. It is important to note that $r=25=m/2$ is the largest rank for which unique recovery of $X^*$ is theoretically possible  \citep{Unnikrishnan-Allerton2015,Unnikrishnan-TIT18,Tsakiris-ICML2019,Dokmanic-SPL2019}. Figure \ref{fig:ldegree} shows that $\theta_{\max}(S^*,\hat{S})$ always stays below $2^\circ$, indicating the success of DPCP. As before, we also show in Figures \ref{fig:rg_gt_10}-\ref{fig:lsr_gt_10} the estimation error when $S^*$ is used instead of $\hat{S}$. Evidently, the performance is nearly identical regardless of whether $\hat{S}$ or $S^*$ is used, again justifying the success of Stage-I. Now $\ell_1$-RR and Algorithm \ref{algo:LSRF} have similar accuracy, but Algorithm \ref{algo:LSRF} is more efficient than $\ell_1$-RR, considering that computing $\hat{X}$ takes $0.3sec$ seconds for Algorithm \ref{algo:LSRF} and $1.5min$ for $\ell_1$-RR. Even though PL delivers $\hat{X}$ in $1sec$, it is not performing as well, which we attribute to its sensitivity on the particular basis of $S^*$ that is used to generate the data; this is not available here since DPCP returns the specific basis of dual principal components. 

\captionsetup[subfigure]{width=3.5cm}

\begin{figure*}[!htbp]
	\centering 
	\begin{tabular}{c c c c c}
		\centering 
		\subfloat[$\theta_{\text{max}}(S^*,\hat{S})$]{%
			\includegraphics[height= 0.20\linewidth]{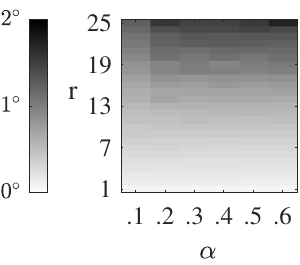}
			\label{fig:ldegree}}  & 
		{\includegraphics[height= 0.20\linewidth]{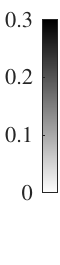}
		} &  \hspace{-0.3cm} 
		\subfloat[$\ell_1$-RR($\hat{S}$) 
		]{%
			\includegraphics[height= 0.20\linewidth]{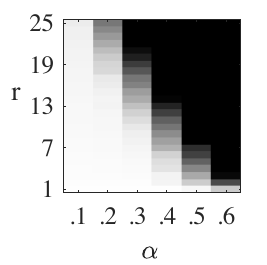}
			\label{fig:rg_10}} &  
		\hspace{-0.6cm}  
		\subfloat[PL($\hat{S}$)
		]{%
			\includegraphics[height= 0.20\linewidth]{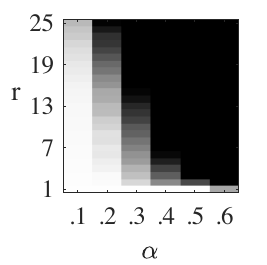}
			\label{fig:psd_10}} &  
		\hspace{-0.6cm} 
		\subfloat[LSRF($\hat{S}$)]{%
			\includegraphics[height= 0.20\linewidth]{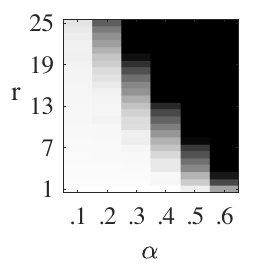}
			\label{fig:lsr_10}} \\
	& & \subfloat[$\ell_1$-RR($S^*$)]{%
         \includegraphics[height=0.20\linewidth]{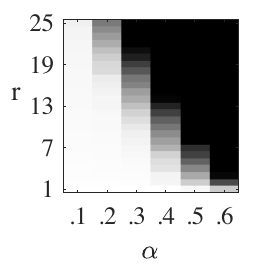}
         \label{fig:rg_gt_10}} & \hspace{-0.6cm} 
    \subfloat[PL($S^*$)]{%
         \includegraphics[height=0.20\linewidth]{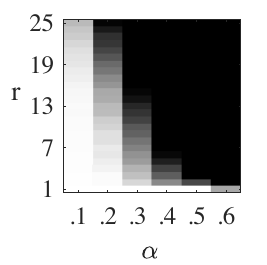}
         \label{fig:psd_gt_10}} & \hspace{-0.6cm} 
    \subfloat[LSRF($S^*$)]{%
         \includegraphics[height=0.20\linewidth]{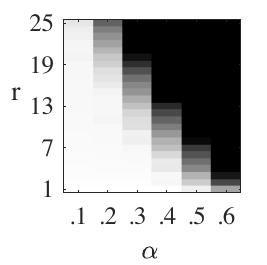}
         \label{fig:lsr_gt_10}}
	\end{tabular}
	\caption{Estimation error $\frac{\|X^*-\hat{X}\|_F} { \|X^*\|_F}$ of UPCA (Algorithm \ref{algo:UPCA}) for sparse permutations ($\alpha \le 0.6)$ and outlier ratio $90\%$, with $\hat{S}$ computed by DPCP \citep{tsakiris2018dual,lerman2018fast} in Stage-I and $\hat{X}$ computed by $\ell_1$-RR \citep{Slawski-JoS19}, PL \citep{Slawski-JCGS2021} or Algorithm \ref{algo:LSRF} in Stage-II.}
	\label{fig:partially shuffled}
\end{figure*}

\subsubsection{Experiments on Face Images} \label{subsection:faces}

In this section we offer a flavor of how the ideas discussed so far apply in a high-dimensional example with real data. We use the well-known database Extended Yale B \citep{GeBeKr01}, which contains fixed-pose face images of distinct individuals, with $64$ images per individual under different illumination conditions. It is well-established that the images of each individual approximately span a low-dimensional subspace. It turns out that for our purpose the value $r = \dim S^* = 4$ is good enough, and values higher than $r$ do not bring significant improvements. Since each image has size $192 \times 168$, the images of each individual can be approximately seen as $n=64$ points $x_j^*, \, j \in [64]$ of a $4$-dimensional linear subspace $S^*$, embedded in an ambient space of dimension $m = 32256$. In what follows we only deal with the images of a fixed individual. We consider four permutation types corresponding to fully or partially ($\alpha=0.4$) permuting image patches of size $16\times 24$ or $48\times 42$, as shown in the second column of Figure \ref{fig:real-face}. To generate a fixed number of $n_{out} = 16$ outliers only one out of the four permutation types is used for each trial. The original images (inliers) together with the ones that have undergone patch-permutation (outliers) are given without any inlier/outlier labels, and the task is to restore all corrupted images. This is a special case of visual permutation learning, recently considered using deep networks \citep{Cruz-CVPR2017,Cruz-PAMI2019}. 

We compute $\hat{S}$ as follows. With $\tilde{X} = U \Sigma V^\top$ the thin SVD of $\tilde{X}$, where $U\in \mathbb{R}^{32256 \times 64}$, DPCP fits a $4$-dimensional subspace $\bar{S}$ to the columns of $\bar{X} = U^\top \tilde{X}$, a process which takes about a tenth of a second. Then $\bar{S}$ is embedded back into $\RR^{32256}$ via the map $U: \RR^{64} \rightarrow \RR^{32256}$ to yield $\hat{S}$. To compute $\hat{X}$ from $\hat{S}$ and $\tilde{X}$ we use the custom algebraic solver of AIEM as well as $\ell_1$-RR, PL, LSRF, with a proximal subgradient implementation of $\ell_1$-RR using the toolbox of \cite{Beck-OMS2019}. 

\renewcommand{\thesubfigure}{\arabic{subfigure}}
\begin{figure}[!htbp]
	\centering  
	\begin{tabular}{c c c c c c}
		{original}    &\hspace{-0.3cm}  \hspace{-0.3cm} {outlier}    &\hspace{-0.30cm} \raisebox{\dimexpr 0cm}{\shortstack{AIEM 
		}}    &\hspace{-0.30cm} \raisebox{\dimexpr 0cm}{\shortstack{$\ell_1$-RR
		}}   &\hspace{-0.30cm} \raisebox{\dimexpr 0cm}{\shortstack{PL 
		}}   &\hspace{-0.3cm} \raisebox{\dimexpr 0cm}{\shortstack{LSRF
		}} \\  \vspace{-0.6cm}
		\subfloat{%
			\includegraphics[height=0.5in]{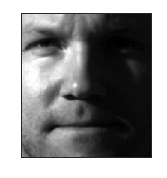}
		}    &\hspace{-0.30cm}
		\subfloat{%
			\includegraphics[height=0.5in]{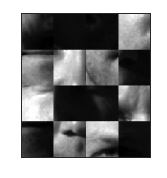}
		}    &\hspace{-0.30cm}
		\subfloat{%
			\includegraphics[height=0.5in]{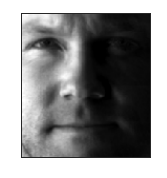}
		}     &\hspace{-0.3cm}  
		&\hspace{-0.30cm}
		&\hspace{-0.3cm}
		\\  \vspace{-0.6cm} 
		{ \ } &\hspace{-0.30cm} 
		\subfloat{%
			\includegraphics[height=0.5in]{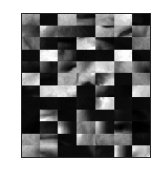}
		}    &\hspace{-0.30cm}
		\subfloat{%
			\includegraphics[height=0.5in]{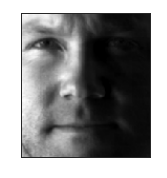}
		}     
		&\hspace{-0.3cm}
		&\hspace{-0.30cm}
		&\hspace{-0.3cm}
		\\  \vspace{-0.6cm} 
		{ \ } &\hspace{-0.30cm} 
		\subfloat{%
			\includegraphics[height=0.5in]{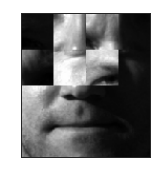}
		}    &\hspace{-0.30cm}
		\subfloat{%
			\includegraphics[height=0.5in]{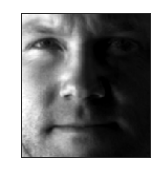}
		}     &\hspace{-0.30cm}
		\subfloat{%
			\includegraphics[height=0.5in]{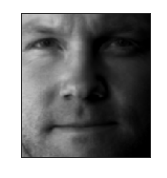}
		}    &\hspace{-0.30cm}
		\subfloat{%
			\includegraphics[height=0.5in]{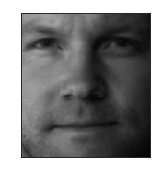}
		}    &\hspace{-0.3cm}
		\subfloat{%
			\includegraphics[height=0.5in]{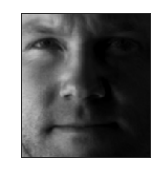}
		} \\ 
		{ \ } &\hspace{-0.30cm} 
		\subfloat{%
			\includegraphics[height=0.5in]{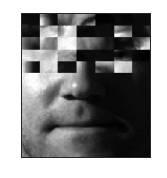}
		}    &\hspace{-0.30cm}
		\subfloat{%
			\includegraphics[height=0.5in]{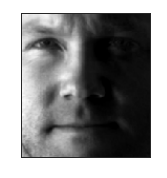}
		}     &\hspace{-0.30cm}
		\subfloat{%
			\includegraphics[height=0.5in]{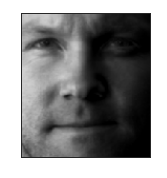}
		}    &\hspace{-0.30cm}
		\subfloat{%
			\includegraphics[height=0.5in]{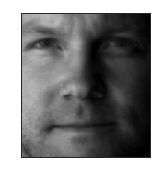}
		}    &\hspace{-0.3cm}
		\subfloat{%
			\includegraphics[height=0.5in]{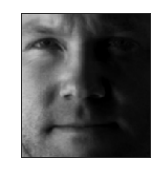}
		} 
	\end{tabular}
	\caption{UPCA on the face dataset Extended Yale B.}
	\label{fig:real-face}
\end{figure}

The first column of Figure \ref{fig:real-face} shows an original image, and the second column shows the corresponding outlier obtained by applying a sample permutation for each of the four different permutation types. Columns three to six give the corresponding point in the output of Algorithm \ref{algo:UPCA} for different unlabeled sensing methods and $\hat{S}$ computed by DPCP \citep{tsakiris2018dual}. CCV-Min \citep{Peng-SPL2020} is not included as branch-and-bound becomes prohibitively expensive for such large $m$). Notably, AIEM \citep{Tsakiris-TIT2020} rather satisfactorily restores the original image regardless of permutation type. 
The performance of the other three methods is shown only for their operational regime, where the given data are corrupted by sparse permutations, and Algorithm \ref{algo:LSRF} most accurately captures the illumination of the original image. Overall, we find these results encouraging, especially if one takes into consideration that the methods are very efficient, requiring only $0.2sec$ (AIEM), $7sec$ ($\ell_1$-RR), $0.2sec$ (PL) and $10sec$ (Algorithm \ref{algo:LSRF}), discounting the DPCP step, which costs $0.1sec$, regardless of permutation type. This is in contrast with existing deep network architectures for visual permutation learning, such as \citep{Cruz-PAMI2019}, which are based on branch-and-bound and thus have in principle  an exponential complexity in the number of permuted patches.

\subsubsection{Experiments on Data Re-identification (UPCA)} \label{section:data re-id}

Finally, we evaluate the UPCA Algorithm \ref{algo:UPCA} for the task of re-identification (section \ref{section:Introduction}) using real educational and medical records and simulated permutations for various sparsity levels $\alpha$, thus emulating a privacy protection scenario. Both of the datasets that we use contain no personally identifiable information. DPCP \citep{tsakiris2018dual,lerman2018fast} computes $\hat{S}$ in Stage-I and $\ell_{1}$-RR \citep{Slawski-JoS19}, PL \citep{Slawski-JCGS2021} or Algorithm \ref{algo:LSRF} produce $\hat{X}$ in Stage-II.

\renewcommand{\thesubfigure}{\alph{subfigure}}
\begin{figure}[!h]
	\centering
	\subfloat[high-school scores]{
		\includegraphics[height=0.3\linewidth]{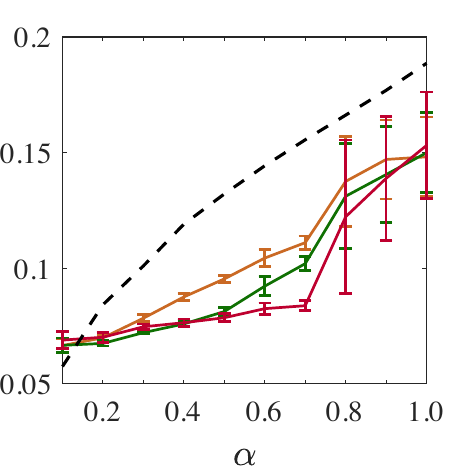}
		\label{fig:score}} 
	\subfloat[breast tumor features]{
		\includegraphics[height=0.3\linewidth]{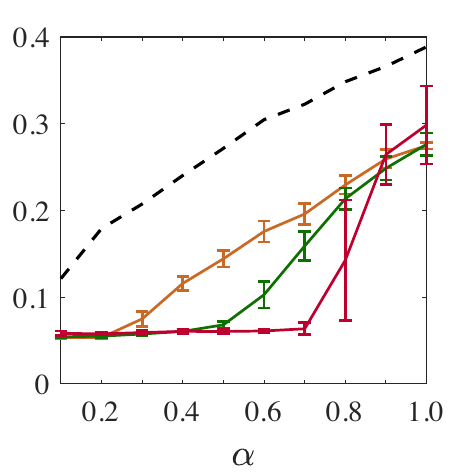}
		\label{fig:breast}} \hspace{-0.4cm} 
	\raisebox{\dimexpr 3cm-\height}{\includegraphics[height=0.11\linewidth]{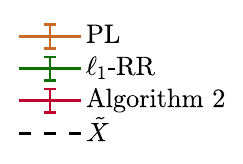}}
	\caption{Relative estimation error $\frac{\|\hat{X}-X^*\|_F}{\|X^*\|_F}$ for UPCA on real data in de-anonymization.
	}\label{fig:recordlink}
\end{figure}

The first dataset consists of the test scores of $m=707$ high-school students on $6$ subjects during two different periods, together with the sum of the score tests for each period, thus $n=14$. For $7$ out of $14$ tests we apply random permutations of the student indices and thus have $50\%$ outliers. With $r=3$, the relative estimation errors on the score records are shown in Figure \ref{fig:score}. The black dashed line depicts the relative difference between the observed data $\tilde{X}$ and the original data $X^*$, which as expected increases for higher $\alpha$'s. The performance of $\ell_{1}$-RR, PL and Algorithm \ref{algo:LSRF} is in alignment with our earlier findings in that Algorithm \ref{algo:LSRF} tends to have a superior performance and PL is the least competitive. All these methods apply in principle for sparse permutations and thus their accuracy naturally degrades for large $\alpha$. 

The second dataset consists of all the benign cases in Breast Cancer Wisconsin (Diagnostic) \citep{Dua:2019}. It has $m=357$ patients and $n=30$ features of a breast mass digitized image for each patient. We randomly permute the patient indices for $15$ of the features thus having $50\%$ outliers and set $r=4$. Figure \ref{fig:breast} shows the relative estimation error of $\hat{X}$ for various permutation sparsity levels $\alpha$, with the unlabeled sensing methods exhibiting the same trend as before. Remarkably, for $\alpha=0.7$, the UPCA Algorithm \ref{algo:UPCA} incorporating Algorithm \ref{algo:LSRF} in Stage-II reduces the original error of the data $\tilde{X}$ from $32.24\%$ to $6.35\%$ in $0.5sec$, as opposed to $15.90\%$ and $19.57\%$ when $\ell_1$-RR \citep{Slawski-JoS19} or PL \citep{Slawski-JCGS2021} are incorporated, respectively.  

\subsection{UMC Experiments}\label{subsection:experiments-UMC}
In this section, we evaluate the proposed two-stage algorithmic pipeline, Algorithm \ref{algo:UMC-proj-then-perm}, for UMC. Section \ref{subsubsection:experiments-UMC-I} tests Stage-I and reports the recovery accuracy of the subspace $S^*$. Section \ref{subsubsection:umc_stage12} presents the performance of the full pipeline in terms of recovering $X^*$.

The experiments operate on synthetic data with the following setup. We set the ambient dimension $m=50$, the overall number of data points $n=100$, the dimension of the ground-truth subspace $r=3$, the noise level is $0.01$, inliers are associated with the dominant identity permutation, and outliers are shuffled by random dense permutations ($\alpha=1$).

\subsubsection{Stage-I of UMC}\label{subsubsection:experiments-UMC-I} 
As discussed in Section \ref{section:UMC pipeline}, Stage-I amounts to solving the problem of matrix completion with column outliers, and for this, we can use two approaches. One is MCO \citep{chen2015matrix}, which simultaneously detects the inliers and estimates the ground-truth subspace $S^*$. The other approach, called DPCP+IST, is an instantiation of our idea in Section \ref{section:UMC pipeline}: (1) detect the inliers applying the DPCP method \citep{tsakiris2018dual} on zero-filled data; (2) complete the detected inliers using a non-convex method based on \textit{iterative soft thresholding} \citep{majumdar2011some}, which we call IST; (3) estimate $S^*$ using an SVD on the matrix of completed inliers. 

With the output $\hat{S}$ of either of the above two approaches, we report the largest principal angle $\theta_{\text{max}}(S^*,\hat{S})$ in Figure  \ref{fig:umc_stage1} for different ratios of outliers ($0.1:0.1:0.9$) and missing entries ($0.1:0.1:0.9$). In particular, via Figure \ref{fig:umc_stage1} we deliver two messages:
\begin{itemize}
    \item Both MCO and DPCP+IST find an accurate enough subspace estimate $\hat{S}$ given sufficiently many inliers and observed entries. For example, we have $\theta_{\text{max}}(S^*,\hat{S})$ equal to $2.44^{\circ}$ for MCO and $5.51^{\circ}$ for DPCP+IST for $10\%$ missing entries and $50\%$ outliers. 
    \item The accuracy of MCO decays more rapidly than DPCP+IST in the presence of more outliers and more missing entries. For example,  with $50\%$ outliers and $50\%$ missing entries, we have $\theta_{\text{max}}(S^*,\hat{S})$ equal to $20.37^{\circ}$ for MCO and $2.61^{\circ}$ for DPCP+IST. Moreover, the figure shows DPCP+IST can handle up to $60\%$ missing entries \& $60\%$ outliers, or $10\%$ missing entries \& $40\%$ outliers, with errors of roughly $5^{\circ}$ when $m=50, n=100,$ and $r=3$. 
\end{itemize}
\begin{figure}[!h]
	\centering
	\raisebox{\dimexpr 2.6cm-\height}{\small{ratio of missing entries}}
	\subfloat[MCO]{
		\includegraphics[height=0.28\linewidth]{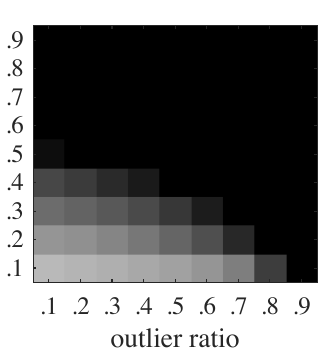}
		\label{fig:mco_stage1}} 
	\hspace{0cm} 
	\subfloat[\scriptsize{DPCP+IST}]{
		\includegraphics[height=0.28\linewidth]{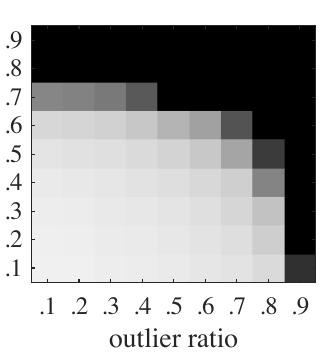}
		\label{fig:dpcpmc_stage1}} 
    \hspace{0.5cm} 
	\includegraphics[height=0.28\linewidth]{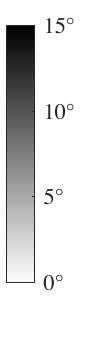}
	\caption{The largest principal angle $\theta_{\text{max}}(S^*,\hat{S})$ for Stage-I of Algorithm \ref{algo:UMC-proj-then-perm} on synthetic data with varying ratios of outliers and missing entries. 
	}\label{fig:umc_stage1}
\end{figure}

Overall, at least in the present setting, DPCP+IST appears to be more accurate and more robust than MCO for Stage-I. This is perhaps because DPCP+IST benefits from decoupling the estimation task into several steps, where each step sufficiently leverages the non-convex structure of the problem. 

Figure \ref{fig:umc_stage1_ranks} reports the largest principal angle $\theta_{\text{max}}(S^*,\hat{S})$ for $r=1:1:20$ when the outlier ratio is $40\%$ and the ratio of missing entries is $20\%$ with $m=50, n=100$. In this setting, the performance starts dropping significantly when the rank is greater than $12$. 
\begin{figure}[!h]
	\centering
	\raisebox{\dimexpr 2.6cm-\height}{\small{$\theta_{\text{max}}(S^*,\hat{S})$}}
	\includegraphics[height=0.25\linewidth]{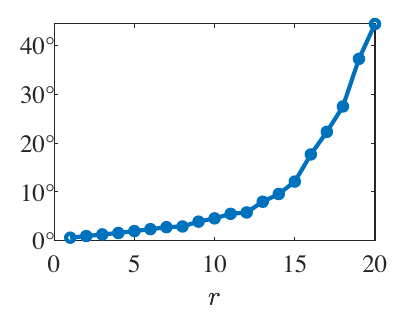}
	\caption{The largest principal angle $\theta_{\text{max}}(S^*,\hat{S})$ for Stage-I of DPCP+IST on synthetic data with varying ranks. 
	}\label{fig:umc_stage1_ranks}
\end{figure}

\subsubsection{The Full Pipeline of UMC} \label{subsubsection:umc_stage12}
We now evaluate the whole two-stage pipeline (Algorithm \ref{algo:UMC-proj-then-perm}) for UMC. We solve Stage-I via DPCP+IST; see Figure \ref{fig:umc_stage1} and Section \ref{subsubsection:experiments-UMC-I}. We solve Stage-II via either AIEM or CCV-Min; see Section \ref{subsection:us-review}.  

In the experiments, we fix the outlier ratio to $40\%$ with varying ratios of missing entries $0.1:0.1:0.8$. For better illustration, we report the recovery accuracy for inliers and outliers separately. In particular, with  the ground-truth matrix $X_{in}^*$ of inliers and the estimated inlier matrix  $\hat{X}_{in}$ given by the algorithm, we report the relative error $\frac{||\hat{X}_{in} - X^*_{in}||_{F}}{||X^*_{in}||_{F}}$ that reflects the recovery accuracy of inliers. The metric $\frac{||\hat{X}_{out} - X^*_{out}||_{F}}{||X^*_{out}||_{F}}$ is defined similarly  for outliers.

\begin{figure}[!h]
	\centering
    \raisebox{\dimexpr 2.9cm-\height}{\footnotesize{relative error}}
	\includegraphics[height=0.33\linewidth]{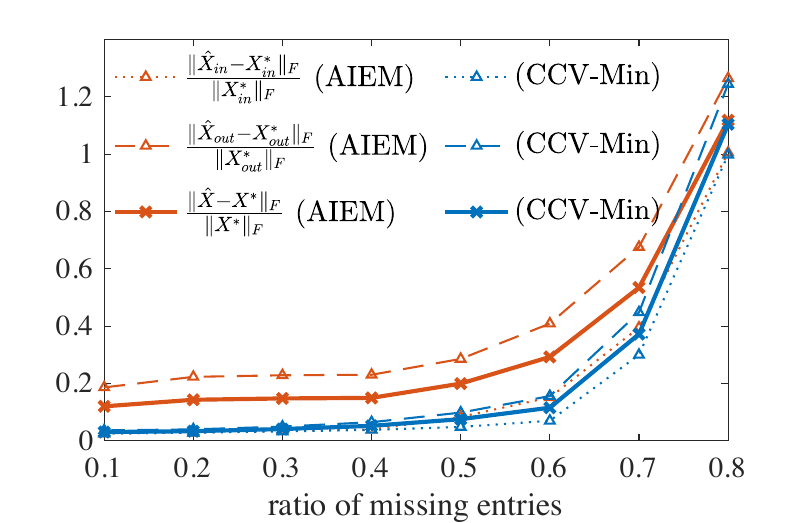}
	\caption{Relative errors for UMC with $m=50$, $n=100$, $r=3$, and outlier ratio $40\%$.}
	\label{fig:umc_stage12_projperm}
\end{figure}

\subsubsection{Experiments on Data Re-identification (UMC)} \label{section:data re-id umc}
Extending the UPCA experiments of Section \ref{section:data re-id}, we now evaluate our UMC algorithm on the medical and educational data. We fix the outlier ratio to be $50\%$, and vary the ratio of missing entries among $0.05:0.05:0.50$. 

\begin{figure}[!h]
	\centering
	\raisebox{\dimexpr 2.5cm-\height}{\footnotesize{relative error}}
	\subfloat[high-school scores]{
		\includegraphics[height=0.28\linewidth]{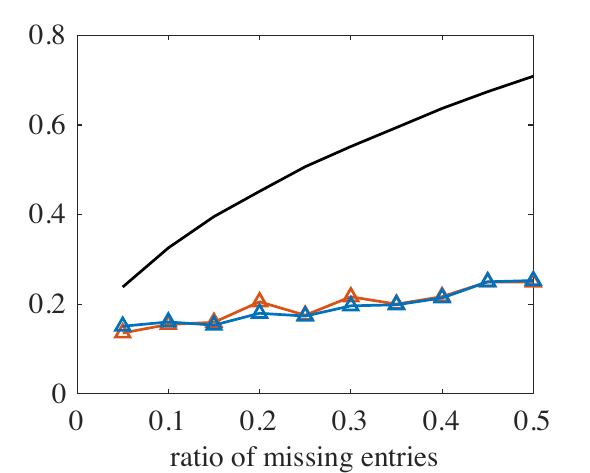}
		\label{fig:score_umc_projperm}} 
	\subfloat[breast tumor features]{
		\includegraphics[height=0.28\linewidth]{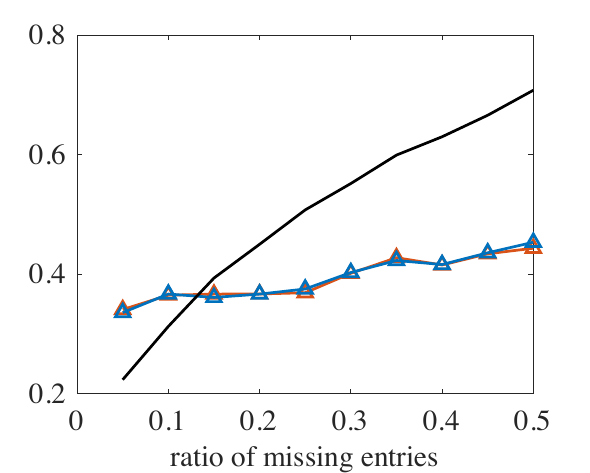}
		\label{fig:breast_umc_projperm}} \hspace{-0.4cm} 
	\raisebox{\dimexpr 3cm-\height}{\includegraphics[height=0.08\linewidth]{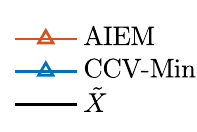}}
	\caption{Relative estimation errors of the proposed UMC pipeline on real data.
	}\label{fig:reid_umc_projperm}
\end{figure}

In Figure \ref{fig:reid_umc_projperm} we present the results. The black curves indicate the distances from the (zero-filled) observed data matrix $\tilde{X}$ to the ground-truth $X^*_{\rm_{in}}$; the distances grow as the ratio of missing entries increases. The proposed UMC algorithm operates on $\tilde{X}$ to restore $X^*$, which is intuitively why it gives smaller relative errors as shown by the colored curves. In particular, for  $25\%$ missing entries, our algorithm reduces the relative error from $51\%$ (black) to $17\%$ (red and blue) for the high-school scores (Figure \ref{fig:score_umc_projperm}) and from $51\%$ to $37\%$ for the breast tumor features (Figure \ref{fig:breast_umc_projperm}).

\section{Proofs}\label{section:proofs}

The proofs of Theorem \ref{thm:main}-\ref{thm:BC-UMC} use basic algebraic geometry and we recall the required notions as we go along. An accessible introduction on the subject is \cite{cox2013ideals}, while a more advanced is \cite{harris2013algebraic}. We also refer to the works \cite{tsakiris2023low, Tsakiris-TIT2020} on matrix completion and linear regression without correspondences, whose mathematical analysis is very related. The proof of Theorem \ref{thm:BC-UMC} is very similar to the proof of Theorem \ref{thm:BC} and is omitted. 

\subsection{Proof of Theorem \ref{thm:main}}
We first prove the theorem over $\CC$, then we transfer the statement over $\RR$. We note here that there is nothing special about $\RR$ and $\CC$ with regards to the problem. Indeed, the same proof applies if one replaces $\RR$ with any infinite field $\mathbb{F}$ and $\CC$ with the algebraic closure $\bar{\mathbb{F}}$ of $\mathbb{F}$. Set 
$$\M_{\CC}= \left. \right\{X\in {\CC}^{m\times n}| \rank_{\CC} X \le r\left. \right\}$$
and note that since $\M_{\CC}$ is irreducible, the intersection of finitely many non-empty open sets in $\M_{\CC}$ is itself non-empty and open, and thus dense. Here irreducibility means that $\M_{\CC}$ can not be decomposed as the union of two proper subvarieties of $\M_{\CC}$. 

\begin{lemma} \label{lem_phi_rank}
	There is an open dense set $\U_1$ in $\M_{\CC}$ such that for any $X\in \U_1$ and any $\underline{\pi} = (\Pi_1,\dots,\Pi_n) \in \prod_{i \in [n]} \P_m$, every $m \times r$ submatrix of $\underline{\pi}(X)$ has rank $r$. 
\end{lemma}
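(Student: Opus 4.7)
The plan is to reduce the statement to finitely many ``single-submatrix'' openness claims and then verify each by exhibiting a concrete witness in $\M_\CC$. Since $\P_m^n$ is a finite set and the collection of $r$-element subsets of $[n]$ is finite, for each pair $(\underline{\pi}, J)$ with $\underline{\pi} \in \P_m^n$ and $J \subset [n]$ of size $r$ I would consider
$$V_{\underline{\pi}, J} := \big\{ X \in \M_\CC \, : \, \rank_\CC [\Pi_j x_j]_{j \in J} < r \big\}.$$
This is Zariski-closed in $\M_\CC$ because it is the vanishing locus (inside $\M_\CC$) of the $r\times r$ minors of the $m\times r$ matrix $[\Pi_j x_j]_{j \in J}$, and each such minor is a polynomial in the entries of $X$ (the permutations only reindex the row-selection). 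Then
$$\U_1 := \M_\CC \setminus \bigcup_{\underline{\pi} \in \P_m^n} \bigcup_{\substack{J \subset [n] \\ \# J = r}} V_{\underline{\pi}, J}$$
is an open subset of $\M_\CC$, and since $\M_\CC$ is irreducible, it suffices to prove that each $V_{\underline{\pi}, J}$ is a \emph{proper} subvariety; the finite union will then remain a proper closed subvariety and $\U_1$ will be open dense.

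To verify properness of $V_{\underline{\pi}, J}$ I would exhibit an explicit element of $\M_\CC \setminus V_{\underline{\pi}, J}$. Writing $J = \{j_1, \dots, j_r\}$, define $X \in \CC^{m \times n}$ by setting its $j_k$-th column to $\Pi_{j_k}^{-1} e_k$ for $k \in [r]$, where $e_k$ is the $k$-th standard basis vector of $\CC^m$, and all remaining columns to zero. Then $\rank_\CC X \le r$, so $X \in \M_\CC$, while $[\Pi_j x_j]_{j \in J} = [e_1 \cdots e_r]$ has rank $r$, so $X \notin V_{\underline{\pi}, J}$. This shows $V_{\underline{\pi}, J} \subsetneq \M_\CC$, completing the argument.

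There is no serious obstacle: the only mildly subtle point is recognising that the permutations do not affect the polynomial nature of the defining minors (since they act by row permutations on each column independently), so closedness of $V_{\underline{\pi}, J}$ is automatic, and density of $\U_1$ then follows purely from irreducibility of $\M_\CC$ combined with the witness construction above.
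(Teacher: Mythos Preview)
Your argument is correct and follows the same architecture as the paper's proof: reduce to finitely many pairs $(\underline{\pi},J)$, observe that the rank condition cuts out a Zariski-closed set via the $r\times r$ minors, and conclude density from irreducibility of $\M_\CC$ once each closed set is shown to be proper by exhibiting a witness.

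The only point of departure is the witness construction. The paper argues that each $\Pi_{j_k}$ is diagonalizable over $\CC$ with nonzero eigenvalues, chooses eigenvectors $v_1,\dots,v_r$ of $\Pi_{j_1},\dots,\Pi_{j_r}$ that are linearly independent, and places $v_k$ in column $j_k$; then $\underline{\pi}(X)_J=[\lambda_1 v_1\cdots\lambda_r v_r]$ has rank $r$. Your choice $x_{j_k}=\Pi_{j_k}^{-1}e_k$ is more elementary: it avoids the spectral theory of permutation matrices entirely, uses only invertibility, and works verbatim over any field rather than requiring the algebraic closure for diagonalizability. The paper's eigenvector construction buys nothing extra here; your witness is simply cleaner for this particular lemma.
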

\begin{proof}
	First, fix some $\underline{\pi} = (\Pi_1,\dots,\Pi_n) \in \prod_{i \in [n]} \P_m$ and then some index set $\J = \{j_1, \dots, j_r\} \subset [n]$. 
	The submatrix $\underline{\pi}(X)_\J := \left[\Pi_{j_1} x_{j_1}, \cdots, \Pi_{j_r}x_{j_r}\right]$ of $\underline{\pi}(X)$ has rank less than $r$ if and only if all of its $r \times r$ minors are zero. For each subset $\I=\{i_1,\dots,i_r\} \subset [m]$ we have a polynomial $\det \underline{\pi}(Z)_{\I,\J} \in \CC[Z]$ where $\underline{\pi}(Z)_{\I,\J}$ is the row-submatrix of $\underline{\pi}(Z)_\J$ obtained by selecting the rows with index in $\I$. 
	The set of matrices in ${\CC}^{m \times n}$ for which the evaluation of this polynomial is non-zero is an open set, call it $\U_{\underline{\pi}, \I, \J}$. 
	Then $\underline{\pi}(X)_\J$ has rank $r$ if and only if $X \in \U_{\underline{\pi}, \J}:=\bigcup_{\I} \U_{\underline{\pi}, \I, \J}$, where $\I$ ranges over all subsets of $[m]$ of cardinality $r$. As a union of finitely many open sets, $\U_{\underline{\pi}, \J}$ is open.
	Moreover, every $m \times r$ submatrix of $\underline{\pi}(X)$ has rank $r$ if and only if $X \in \U_{\underline{\pi}}:=\bigcap_{\J} \U_{\underline{\pi}, \J}$, where now $\J$ ranges over all subsets of $[n]$ of cardinality $r$. $\U_{\underline{\pi}}$ is open because it is the finite intersection of open sets. Finally, every $m \times r$ submatrix of $\underline{\pi}(X)$ has rank $r$ for any $\underline{\pi}$ if and only if $X$ is in the open set $\U_1:= \bigcap_{\underline{\pi}} \U_{\underline{\pi}}$, where the intersection is taken over all $\underline{\pi}$'s. 
	
	The proof will be complete once we show that $\U_1$ is non-empty. By what we said above about intersections of finitely many non-empty open sets in an irreducible variety, it is enough to show that each $\U_{\underline{\pi}, \J}$ is non-empty. We do this by constructing a specific $X \in \U_{\underline{\pi}, \J}$. Recall here that any $\Pi \in \P_m$ is diagonalizable over $\CC$ with non-zero eigenvalues. It is an elementary fact in linear algebra that there exists a choice of eigenvector $v_{k}$ of $\Pi_{j_k}$ for every $k \in [r]$ such that $v_{1},\dots,v_{r}$ are linearly independent. Now our $X$ is taken to be the matrix with $v_{k}$ at column $j_k$ for every $k \in [r]$ and zero everywhere else. Clearly $X \in \M_{\CC}$ and moreover $\underline{\pi}(X)_\J=\left[\Pi_{j_1} x_{j_1} \cdots, \Pi_{j_r}x_{j_r}\right] = \left[\Pi_{j_1} v_{1} \cdots, \Pi_{j_r}v_{r}\right] = [\lambda_1 v_1 \cdots \lambda_r v_r]$, where $\lambda_k$ is the corresponding eigenvalue of $v_k$. Since none of the $\lambda_{k}$'s is zero, this matrix has rank $r$, that is $X \in \U_{\underline{\pi}, \J}$. 
\end{proof}

Denote by $\C(X)$ the column-space of $X$ and $I_m$ the identity matrix of size $m \times m$. Note also that whenever $p$ is a non-zero polynomial in $\nu$ variables with coefficients in $\CC$, there is always some $\xi \in {\CC}^\nu$ such that $p(\xi) \neq 0$. 

\begin{lemma} \label{lem_Pi_ne_CX}
	There is an open dense set $\U_2$ in $\M_{\CC}$ such that for any $X \in \U_2$, we have that $\Pi x_{j} \notin \C(X)$ for any $\Pi \in \P_m \setminus\{I_m\}$ and any $j \in [n]$.	 
\end{lemma}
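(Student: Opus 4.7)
The plan is to prove density of $\U_2$ by exhibiting it as a finite intersection of open dense subsets $\U_{\Pi,j}$, one for each pair $(\Pi,j)$ with $\Pi \in \P_m \setminus \{I_m\}$ and $j \in [n]$. Specifically, set $\U_{\Pi,j} := \{X \in \M_\CC : \Pi x_j \notin \C(X)\}$. Since $\P_m \setminus \{I_m\}$ and $[n]$ are finite and $\M_\CC$ is irreducible, once each $\U_{\Pi,j}$ is shown to be open and non-empty (hence dense) in $\M_\CC$, the intersection $\U_2 := \bigcap_{\Pi \ne I_m,\, j \in [n]} \U_{\Pi,j}$ is open and dense, which is exactly the conclusion.

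For openness, I would observe that on the dense open subvariety of $\M_\CC$ consisting of matrices of rank exactly $r$ (which contains any non-empty open subset of $\M_\CC$ up to intersection), the condition $\Pi x_j \in \C(X)$ is equivalent to $\rank[X \mid \Pi x_j] \le r$, i.e.\ the simultaneous vanishing of all $(r+1)\times(r+1)$ minors of the augmented matrix $[X \mid \Pi x_j]$. These minors are polynomials in the entries of $X$, so their common vanishing is Zariski-closed and $\U_{\Pi,j}$ is Zariski-open in $\M_\CC$.

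The substantive step is showing $\U_{\Pi,j} \ne \emptyset$, which I would do by an explicit construction using the eigenstructure of $\Pi$. Because $\Pi$ is a permutation matrix other than $I_m$, it is diagonalisable over $\CC$ and must have at least one eigenvalue $\lambda \ne 1$ (otherwise it would be $I_m$), with an eigenvector $w$; it also always has the all-ones eigenvector $v$ with eigenvalue $1$. Set $y := v + w$, so that $\Pi y = v + \lambda w$. Because $r < m$, I can pick vectors $v_2,\dots,v_r \in \CC^m$ so that $v,w,v_2,\dots,v_r$ are linearly independent. I then let $X$ be the $m \times n$ matrix whose $j$-th column equals $y$, whose $r-1$ other columns (say at some chosen positions in $[n]\setminus\{j\}$) equal $v_2,\dots,v_r$, and whose remaining columns are zero. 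Then $\rank X = r$, so $X \in \M_\CC$, and $\C(X) = \Span(y, v_2, \dots, v_r)$. A short check shows $\Pi y \notin \C(X)$: if it were, then since $y \in \C(X)$, the combination $\lambda y - \Pi y = (\lambda - 1) v$ would lie in $\C(X)$, forcing $v \in \C(X)$ (as $\lambda \ne 1$) and then $w = y - v \in \C(X)$, so that $v, w, v_2, \dots, v_r$ would all lie in the $r$-dimensional space $\C(X)$, contradicting their linear independence.

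The main obstacle here is modest: it is to produce, for each non-identity $\Pi$, an $r$-dimensional subspace of $\CC^m$ containing some vector $y$ with $\Pi y \notin \Span(y,\dots)$, and the eigendecomposition of $\Pi$ together with $r < m$ handles this uniformly. A more conceptual (and equivalent) phrasing would invoke that the $\Pi$-invariant $r$-dimensional subspaces form a proper closed subvariety of $\Gr(r, \CC^m)$ whenever $\Pi \ne I_m$, from which a generic choice of $r$-dimensional $\C(X)$ and generic $y \in \C(X)$ automatically satisfies $\Pi y \notin \C(X)$; but an explicit witness as above is already enough to conclude.
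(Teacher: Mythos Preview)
Your proof is correct and follows the same architecture as the paper's: define $\U_{\Pi,j}$ for each pair $(\Pi,j)$, show each is open via the $(r+1)\times(r+1)$ minors of $[X\mid \Pi x_j]$, show each is non-empty, and intersect using irreducibility of $\M_\CC$. The openness argument is essentially identical (your restriction to the rank-$r$ locus is a harmless way of saying the same thing the paper says).

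Where you genuinely diverge is in the non-emptiness witness. The paper argues formally: it takes an $m\times r$ matrix $Z$ of indeterminates, picks indices $i\neq i'$ with $z_{i1}$ landing in row $i'$ of $\Pi z_1$, and observes that $\det[Z\;\Pi z_1]_\I$ has a nonzero $z_{i1}^2$-coefficient, so the polynomial is not identically zero and some specialization $Z'$ works. You instead build an explicit point from the spectral data of $\Pi$: with $v$ the all-ones eigenvector and $w$ an eigenvector for some $\lambda\neq 1$, the column $y=v+w$ together with generic fillers $v_2,\dots,v_r$ gives $\Pi y\notin\C(X)$ by a short linear-independence count. Both arguments exploit that $\Pi\neq I_m$ in exactly one place (the paper: existence of $i$ with $z_{i1}$ displaced; you: existence of an eigenvalue $\lambda\neq 1$), and both use $r<m$ to leave room for the extra vector. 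Your construction is a bit more geometric and self-contained; the paper's polynomial argument is closer in spirit to the surrounding proofs and makes the ``defined by integer-coefficient polynomials'' bookkeeping (used later to pass from $\CC$ to $\RR$) completely transparent. Either route closes the lemma.
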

\begin{proof}
	$\Pi x_{j} \notin \C(X)$ if and only if $\rank[X \, \, \Pi x_{j}] = r+1$. As in the proof of Lemma \ref{lem_phi_rank}, this condition is met on an open set $\U_{\Pi,j}$ of $\M_{\CC}$ where some $(r+1) \times (r+1)$ determinant of $[X \, \, \Pi x_{j}]$ is non-zero. 
	Then the statement of the theorem is true on the open set $\U_2 = \bigcap_{\Pi \in \P_m, \, j \in [n]} \U_{\Pi,j}$. As in the proof of Lemma \ref{lem_phi_rank}, to show that $\U_2$ is non-empty it suffices to show that each $\U_{\Pi,j}$ is non-empty. 
	We show the existence of an $X \in \U_{\Pi,j}$. Let $Z=(z_{ik})$ be an $m \times r$ matrix of variables over $\CC$ and consider the polynomial ring $\CC[Z]$. Let us write $z_k$ for the $k$th column of $Z$. Since $\Pi$ is not the identity, there exists some $i \in [m]$ such that $z_{i1}$ is different from the $i$th element of $\Pi z_1$, where $z_1$ is the first column of $Z$. Instead, suppose that the variable $z_{i1}$ appears in the $i'$th coordinate of $\Pi z_1$ with $i' \neq i$. 
	Now take any $\I \subset [m]$ with cardinality $r+1$ such that $i,i' \in \I$ and consider $\det [Z \, \, \Pi z_1 ]_{\I}$ where $[Z \, \, \Pi z_1]_{\I}$ is the submatrix of $[Z \, \, \Pi z_1]$ obtained by selecting the rows with index in $\I$. This is a polynomial of $\CC[Z]$ that has the form $\pm z_{i1}^2 \det [z_2 \cdots z_r]_{\I \setminus \{i,i'\}} + \cdots$ where the remaining terms do not involve $z_{i1}^\nu$ for $\nu >1$. Since the entries of $Z$ are algebraically independent, $\det [z_2 \cdots z_r]_{\I \setminus \{i,i'\}}$ is a non-zero polynomial. We conclude that $\det [Z \, \, \Pi z_1 ]_{\I}$ is also a non-zero polynomial. Hence there exists some $Z' \in {\CC}^{m \times r}$ such that $\det [Z' \, \, \Pi z_1' ]_{\I} \neq 0$. Now define $X$ by setting $x_j = z_1'$,  $x_{j_k} = z_k', \, k \in [r]$ for any choice of $j_k$'s distinct from $j$, and zeros everywhere else. By construction $X \in \U_{\Pi,j}$.   
\end{proof}

Let $f: {\CC}^{m\times r} \times {\CC}^{r\times n} \to \M_{\CC}$ be the surjective map given by $f(B', C') = B'C'$.

\begin{lemma} \label{lem_x_ne_PhiX}
	There is an open dense set $\U_3$ in $\M_{\CC}$ such that for any $X\in \U_3$, we have that for any $j \in [n]$, any $\J=\{j_1,\dots,j_r\} \subset [n]$ with $j \notin \J$ and any $\Pi_1,\dots, \Pi_r \in \P_m$ not all identities, it holds that $\rank [x_j \,  \, \Pi_1 x_{j_1} \cdots \,  \Pi_{r} x_{j_r}] = r+1$. 
\end{lemma}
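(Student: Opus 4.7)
Following the template of Lemmas~\ref{lem_phi_rank} and~\ref{lem_Pi_ne_CX}, for each tuple $(j,\J,\underline{\pi})$ with $\underline{\pi}=(\Pi_1,\dots,\Pi_r)$ not all equal to $I_m$, the condition $\rank[x_j,\Pi_1 x_{j_1},\dots,\Pi_r x_{j_r}]=r+1$ is the non-simultaneous vanishing of the $(r+1)\times(r+1)$ minors of this matrix, and hence defines an open subset $\U_{j,\J,\underline{\pi}}\subset\M_\CC$. We take $\U_3$ to be the intersection over the finitely many such tuples, which is open, and by irreducibility of $\M_\CC$ is dense provided each $\U_{j,\J,\underline{\pi}}$ is non-empty.

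The heart of the argument is to exhibit, for each $(j,\J,\underline{\pi})$ with some $\Pi_{i_0}\neq I_m$, an explicit $X\in\M_\CC$ satisfying the rank condition. We use the parametrization $(B,C)\mapsto BC$ and specialize $C$ so that $c_{j_k}=e_k$ for $k\in[r]$, $c_j=e_{i_0}$, and all other columns of $C$ are zero. Writing $b_k$ for the $k$-th column of $B$, the matrix of interest becomes $M:=[b_{i_0},\Pi_1 b_1,\dots,\Pi_r b_r]$, whose columns we index by $0,1,\dots,r$ (so column $0$ is $b_{i_0}$ and column $k\ge 1$ is $\Pi_k b_k$). Pick $l_0\in[m]$ with $\pi_{i_0}(l_0)\neq l_0$ (available because $\Pi_{i_0}\neq I_m$) and $\I\subset[m]$ of cardinality $r+1$ containing both $l_0$ and $\pi_{i_0}(l_0)$. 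The goal reduces to showing that $P(B):=\det M_\I\in\CC[B]$ is a non-zero polynomial, since then a generic choice of $B$ produces the desired $X$.

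The plan is to extract the coefficient of the monomial $b_{l_0,i_0}^2$ in $P$. The variable $b_{l_0,i_0}$ appears in $M_\I$ at exactly two positions: row $l_0$ of column $0$ (namely, the $l_0$-th entry of $b_{i_0}$) and row $\pi_{i_0}(l_0)$ of column $i_0$ (namely, the $\pi_{i_0}(l_0)$-th entry of $\Pi_{i_0}b_{i_0}$), and these lie in \emph{distinct} rows precisely because $\Pi_{i_0}\neq I_m$. Consequently the $b_{l_0,i_0}^2$-coefficient of $P$ is, up to sign, the $(r-1)\times(r-1)$ minor of $[\Pi_1 b_1,\dots,\Pi_r b_r]$ obtained by deleting column $i_0$ and restricting to the rows in $\I\setminus\{l_0,\pi_{i_0}(l_0)\}$. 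Since each column of this smaller matrix draws its entries from a single distinct column of $B$ (those with index in $[r]\setminus\{i_0\}$), the signed monomials in its expansion lie in pairwise disjoint variable sets and cannot cancel, so this minor is a non-zero polynomial.

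The main obstacle is precisely the monomial-tracking above: one has to verify that the two occurrences of $b_{l_0,i_0}$ in $M_\I$ land in different rows and that the residual $(r-1)\times(r-1)$ minor is genuinely non-zero. The hypothesis that not all $\Pi_k$ equal $I_m$ is essential: if $\Pi_{i_0}$ were $I_m$, the two occurrences of $b_{l_0,i_0}$ would coincide in a single row, the squared term would not arise, and in fact column $0$ and column $i_0$ of $M$ would be identical, forcing rank drop.
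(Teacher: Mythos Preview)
Your argument is correct. The opening reduction to non-emptiness of each $\U_{j,\J,\underline{\pi}}$ matches the paper exactly, but your construction of a witness differs. You specialize $C$ so that the relevant columns become columns of $B$, then track the $b_{l_0,i_0}^2$-coefficient in a particular $(r+1)$-minor---this is precisely the technique the paper uses in its proof of Lemma~\ref{lem_Pi_ne_CX}, here applied to the more general configuration. The residual $(r-1)\times(r-1)$ minor indeed has algebraically independent entries (each column draws from a distinct column of $B$, and within each column the permutation $\Pi_k$ selects distinct entries of $b_k$), so its Leibniz expansion has pairwise distinct monomials and cannot vanish.

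The paper instead takes a more geometric route for this lemma: it picks any $(B',C')$ with $B'C'\in\U_1\cap\U_2$, observes that $[\Pi_1 B'c'_{j_1}\cdots\Pi_r B'c'_{j_r}]$ has rank $r$ (from membership in $\U_1$) and that its column-space is distinct from $\C(B')$ (from membership in $\U_2$, since some $\Pi_k\neq I_m$ forces $\Pi_k B'c'_{j_k}\notin\C(B')$), so a generic $c''\in\CC^r$ makes $B'c''$ lie outside that space; replacing $c'_j$ by $c''$ yields the witness. Your approach is self-contained and avoids appealing to Lemmas~\ref{lem_phi_rank} and~\ref{lem_Pi_ne_CX}, at the cost of slightly more delicate monomial bookkeeping; the paper's approach is shorter once those lemmas are in hand and makes the underlying geometric picture---two distinct $r$-dimensional subspaces of $\CC^m$---more transparent.
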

\begin{proof}
	With $j,\J$ and $\Pi_k$'s fixed, the set $\U_{j,\J,\Pi_1,\dots,\Pi_r}$ of $X$'s in $\M_{\CC}$ for which the rank of $[x_j \, \, \Pi_1 x_{j_1} \cdots  \Pi_r x_{j_r}]$ is $r+1$, is open. Indeed, this is defined by the non-simultaneous vanishing of all $(r+1) \times (r+1)$ minors of $[z_j \, \, \Pi_1 z_{j_1} \cdots  \Pi_r z_{j_r}]$, where $z_k$ is the $k$th column of the matrix of variables $Z$ from the proof of Lemma \ref{lem_Pi_ne_CX}. We note that these are polynomials in $Z$ with integer coefficients. Set $\U_3 = \bigcap_{j, \J, \Pi_1,\dots,\Pi_r} \U_{j,\J,\Pi_1,\dots,\Pi_r}$ where the intersection is taken over all choices of $j, \J, \Pi_1,\dots,\Pi_r$ as in the statement of the lemma. As in the proof of Lemma \ref{lem_phi_rank}, the set $\U_3$ is open and to show that it is non-empty is suffices to show that each $\U_{j,\J,\Pi_1,\dots,\Pi_r}$ is non-empty. 
	
	Let $\U_1, \U_2$ be the open sets of Lemmas \ref{lem_phi_rank} and \ref{lem_Pi_ne_CX}. 
	Since $\M_{\CC}$ is irreducible and $\U_1, \U_2$ are open and non-empty, we have that $\U_1 \cap \U_2$ is non-empty. 
	Since $f$ is surjective, $f^{-1}(\U_1 \cap \U_2)$ is also non-empty. Take any $(B',C') \in f^{-1}(\U_1 \cap \U_2)$. By definition, the rank of $[\Pi_1 B'c'_{j_1} \cdots  \Pi_r B'c'_{j_r}]$ is $r$. By hypothesis, there is some $k \in [r]$ such that $\Pi_{k}$ is not the identity and thus again by definition we have $\rank[B' \, \, \Pi_{k}B' c_{j_{k}}'] = r+1$. Consequently, $\Pi_{k}B' c_{j_{k}}' \notin \C(B')$ and so the two $r$-dimensional subspaces $\C(B')$ and $\C\big([\Pi_1 B'c'_{j_1} \cdots  \Pi_r B'c'_{j_r}]\big)$ are distinct. Thus there exists some $c'' \in {\CC}^r$ such that $B'c'' \not\in \C\big([\Pi_1 B'c'_{j_1} \cdots  \Pi_r B'c'_{j_r}]\big)$. Define $C'' \in {\CC}^{r \times n}$ by setting $c_\nu'' = c_\nu'$ for every $\nu \neq j$ and $c_j'' = c''$. Then by construction $B' C'' \in \U_{j,\J,\Pi_1,\dots,\Pi_r}$.
\end{proof}

Take $X^*=[x_1^*\cdots x_n^*] \in \U_3$ and let $\tilde{X} = [\tilde{\Pi}_1 x_1^* \cdots \tilde{\Pi}_n x_n^*]$. Now 
$\rank \tilde{X} = \rank \tilde{\Pi}_1^{-1}  \tilde{X} = \rank [x_1^* \, \, \tilde{\Pi}_1^{-1}\tilde{\Pi}_2 x_2^* \cdots \tilde{\Pi}_1^{-1}\tilde{\Pi}_n x_n^*]$. If there is some $k \ge 2$ such that $\tilde{\Pi}_1 \neq \tilde{\Pi}_k$, by Lemma \ref{lem_x_ne_PhiX} any $m \times (r+1)$ submatrix of $\tilde{\Pi}_1^{-1}  \tilde{X}$ that contains columns $1$ and $k$ will have rank $r+1$. On the other hand, when all $\tilde{\Pi}_k$'s are equal for $k \in [n]$, the rank of $\tilde{X}$ is $r$ by Lemma \ref{lem_phi_rank}. This concludes the proof of the theorem over $\CC$ with the claimed open set being $\U_3$, which we denote in the sequel by $\U_\CC$. 

Set $\M_\RR= \left. \right\{X\in \RR^{m\times n}| \rank_{\RR} X \le r\left. \right\}$. There is an inclusion of sets $i: \M_\RR \hookrightarrow \M_{\CC}$ where for $X \in \M_\RR$ we view $i(X)$ as the complex matrix associated to $X$. The reason for this inclusion is that if the columns of $X$ generate an $r$-dimensional subspace over $\RR$, then they generate an $r$-dimensional subspace over $\CC$. To finish the proof, it suffices to show the existence of a non-empty open set $\U_\RR$ in $\M_\RR$ such that $i(\U_\RR) \subset \U_\CC$. This comes from two key ingredients. 
The first one is the observation that the polynomials that induce $\U_\CC$, i.e. the polynomials of $\CC[Z]$ whose non-simultaneous vanishing indicates membership of a point $X \in \M_{\CC}$ in $\U_\CC$, they have integer and thus real coefficients. This can be seen by inspecting the proof of Lemma \ref{lem_x_ne_PhiX}. 
Call the set of these polynomials $\mathfrak{p}_\U \subset \mathbb{Z}[Z]$. For the second ingredient, let $\mathfrak{p}_\M \subset \mathbb{Z}[Z]$ be the set of all $(r+1) \times (r+1)$ minors of the matrix of variables $Z$. It is a matter of linear algebra that $\M_\RR$ and $\M_{\CC}$ are the common roots of the polynomial system $\mathfrak{p}_\M$ over $\RR^{m \times n}$ and ${\CC}^{m \times n}$ respectively. What is instead a difficult theorem in commutative algebra is that the following algebraic converse is true; see Section 2.6 in \cite{tsakiris2023low}: a polynomial $q \in \RR[Z]$ vanishes on every point of $\M_\RR$ if and only if it is a polynomial combination of elements of $\mathfrak{p}_\M$, that is if and only if $q = \sum_{p \in \mathfrak{p}_\M} c_p \, p$ for some $c_p$'s in $\RR[Z]$. This statement also holds true if we replace $\RR$ with $\CC$. Now the set $\U_\CC$ consists of those points of $\M_{\CC}$ that are roots of the polynomial system $\mathfrak{p}_\M$ but not of $\mathfrak{p}_\U$. Since $\U_\CC$ is non-empty, not all polynomials in $\mathfrak{p}_\U$ are polynomial combinations of $\mathfrak{p}_\M$. But then, by what we just said, not all points of $\M_\RR$ are common roots of $\mathfrak{p}_\U$. This means that the open set of $\M_\RR$ defined by the non-simultaneous vanishing of all polynonials in $\mathfrak{p}_\U$ is non-empty. This open set is the claimed $\U$.

\subsection{Proof of Theorem \ref{thm:BC} and Theorem \ref{thm:BC-UMC}}

Let $\U_1$ be the open set of Theorem 1. Let $\U_2$ be the set of $X$'s for which $\C(X)$ does not drop dimension under projection onto any $r$ coordinates. This set is open in $\M$ because $X \in \U_2$ if and only if for any $\I \subset [m]$ of cardinality $r$ not all $r \times r$ minors of $X_\I$ are zero, $X_\I$ being the row-submatrix of $X$ obtained by selecting the rows with index in $\I$. Set $\U = \U_1 \cap \U_2$. Then for any $X^* \in \U$ and any $\Pi \in \P_m$ there is a unique factorization $\Pi X^* = B_\Pi^* C_\Pi^*$ with the top $r \times r$ block of $B_\Pi^* \in \RR^{m \times r}$ being the identity. Since $\bar{p}_{\ell,j}(\tilde{X}) =\bar{p}_{\ell,j}(X^*)= \bar{p}_{\ell,j}(\Pi X^*) = \bar{p}_{\ell,j}(B_\Pi^* C_\Pi^*)$ we have that $(B_\Pi^*, C_\Pi^*) \in \Y_{X^*}$ for every $\Pi \in \P_m$. For the reverse direction we recall a basic fact (see Lemma 2 of \cite{Song-ISIT18}): 
\begin{lemma}\label{lem:sym}
	Fix any $j \in [n]$. Suppose that $\xi_1,\xi_2 \in \RR^m$ are such that $\bar{p}_{\ell,j}(\xi_1) = \bar{p}_{\ell,j}(\xi_2)$ for every $\ell \in [m]$. Then $\xi_1 = \Pi \xi_2$ for some $\Pi \in \P_m$. 
\end{lemma}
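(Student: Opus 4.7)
The plan is to recognize the $\bar{p}_{\ell,j}$'s, when restricted to the $j$th column, as the classical power-sum symmetric polynomials, and then invoke the fact that power sums up to degree $m$ determine a multiset of $m$ real numbers uniquely.

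More concretely, write $\xi_1=(a_1,\dots,a_m)^\top$ and $\xi_2=(b_1,\dots,b_m)^\top$ and let $p_\ell(t_1,\dots,t_m)=\sum_{i\in[m]} t_i^\ell$ denote the $\ell$th power-sum polynomial. The hypothesis is precisely that $p_\ell(a_1,\dots,a_m)=p_\ell(b_1,\dots,b_m)$ for every $\ell\in[m]$. Let $e_\ell$ denote the $\ell$th elementary symmetric polynomial. The first step is to apply Newton's identities, which express $e_1,\dots,e_m$ as polynomials in $p_1,\dots,p_m$ with rational coefficients (the denominators being $1,2,\dots,m$, which are invertible in $\RR$). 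It follows that $e_\ell(a_1,\dots,a_m)=e_\ell(b_1,\dots,b_m)$ for every $\ell\in[m]$.

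The second step is to form the two monic univariate polynomials
\[
f(t)=\prod_{i\in[m]}(t-a_i), \qquad g(t)=\prod_{i\in[m]}(t-b_i),
\]
whose coefficients are (up to signs) exactly the $e_\ell(a_\bullet)$ and $e_\ell(b_\bullet)$. By the previous step, $f=g$ as polynomials in $\RR[t]$. The final step is to invoke unique factorization in $\RR[t]$ (or equivalently, that a monic polynomial determines its multiset of roots in $\CC$, and the roots here are real by construction) to conclude that the multisets $\{a_1,\dots,a_m\}$ and $\{b_1,\dots,b_m\}$ coincide. Choosing any bijection of $[m]$ realizing this equality of multisets yields a permutation matrix $\Pi\in\P_m$ with $\xi_1=\Pi\xi_2$, as desired.

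There is no real obstacle; the only thing to be mildly careful about is that Newton's identities require division by $1,\dots,m$, which is why one needs all $m$ power sums (not fewer) and why the characteristic-zero hypothesis (implicit here because we work over $\RR$) matters. This is a standard fact from the theory of symmetric functions and is the reason the authors call it ``fundamental.''
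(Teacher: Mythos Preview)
Your argument is correct and is precisely the classical route: power sums $\Rightarrow$ elementary symmetric polynomials via Newton's identities $\Rightarrow$ equality of the monic polynomials $\prod_i(t-a_i)$ and $\prod_i(t-b_i)$ $\Rightarrow$ equality of the multisets of coordinates. The paper itself does not spell this out; it simply refers the reader to the proof of Lemma~2 in \cite{Song-ISIT18}, which is the same standard symmetric-function argument you have written. So your proposal is correct and, modulo the paper outsourcing the proof to a citation, follows the same approach.
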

Now let $(B',C') \in \Y_{X^*}$ and write $c_j'$ for the $j$th column of $C'$. For a fixed $j \in [n]$ the equations $q_{\ell,j}(B',C')=0$ are equivalent to $\bar{p}_{\ell,j}(B' c_j') = \bar{p}_{\ell,j}(x^*_j)$ for every $\ell \in [m]$. By Lemma \ref{lem:sym} there must exist some $\Pi_j \in \P_m$ such that $B' c_j' = \Pi_j x_j^*$. This is true for every $j \in [n]$ so that $B' C' = [\Pi_1 x_1^* \cdots \Pi_n x_n^*]$. This implies that $\rank [\Pi_1 x_1^* \cdots \Pi_n x_n^*] = r$. Since $X^* \in \U$, Theorem 1 gives that all $\Pi_j$'s must be the same permutation $\Pi \in \P_m$, so that $B' C' = \Pi X^*$. Since by construction for any $(B'',C'') \in \Y_{X^*}$ the top $r \times r$ block of $B''$ is the identity, we have that $B'= B_\Pi^*$ and thus necessarily $C'= C_\Pi^*$. 

The proof of Theorem \ref{thm:BC-UMC} is very similar to the proof of Theorem \ref{thm:BC} and is omitted.

\subsection{Proof of Theorem \ref{thm:dominant}}

We first notice $\# \{ \tilde{x}_j \, | \, \tilde{x}_j \in S^* \, ; \, j \in [n]\} \ge \mu(I_m)\ge r+1$. 
Now we suppose $\tilde{x}_{j_1}, \dots,\tilde{x}_{j_r}, \tilde{x}_{j_{r+1}}$ are $r+1$ points in $\tilde{X}$ such that not all $\Pi_{j_1}, \dots, \Pi_{j_r}, \Pi_{j_{r+1}}$ are the identity $I_m$. 
Since $\mu(\Pi)<r$ for $\Pi\ne I_m$, it is impossible that $\Pi_{j_1}= \dots=\Pi_{j_r}= \Pi_{j_{r+1}}$. According to Theorem 1, the points $\tilde{x}_{j_1}, \dots,\tilde{x}_{j_r}, \tilde{x}_{j_{r+1}}$ span a subspace of dimension $r+1$.  
Hence, for any subspace $S\ne S^*$ with $\dim(S)\le r$, we have $\# \{ \tilde{x}_j \, | \, \tilde{x}_j \in S \, ; \, j \in [n]\} \le r$. \hfill 

\subsection{Proof of Theorem \ref{thm:main-UMC} }

As before, we first consider the problem over $\CC$. The transfer to $\RR$ follows the same argument as in the proof of Theorem 1 in \cite{tsakiris2023low} and is omitted. We use the same letters $\underline{p}_\Omega: \M_\CC \rightarrow \CC^\Omega$ and $\underline{\pi}_\Omega :  \CC^\Omega \rightarrow  \CC^\Omega$ to indicate the same maps between the corresponding spaces over $\CC$. 

The map $\underline{p}_\Omega$ is defined by $x_{ij} \mapsto x_{ij}$ if $(i,j) \in \Omega$ and $x_{ij} \mapsto 0$ if $(i,j) \in \Omega^c$. These are polynomial functions in the $x_{ij}$'s so that $\underline{p}_\Omega$ is a \emph{morphism} of irreducible algebraic varieties. In particular, $\underline{p}_\Omega$ is continuous in the Zariski topology, and thus inverse images of open sets are open. Now, under the hypothesis on $\Omega$, it was shown in \cite{tsakiris2023low,tsakiris2020results} that $\Omega$ is generically finitely completable. This is equivalent to the existence of a dense open set $U_0$ in $\M_\CC$ such that for every $X^* \in U_0$ the fiber $\underline{p}_\Omega^{-1}(X^*)$ is a finite set. It is also equivalent to saying that $\underline{p}_\Omega$ is dominant, in the sense that the image $\underline{p}_\Omega(\M_\CC)$ of $\underline{p}_\Omega$ is a dense set in $\CC^\Omega$, that is the closure of $\underline{p}_\Omega(\M_\CC)$ is $\CC^\Omega$.

\begin{lemma} \label{lem_Chevalley}
	The image $\underline{p}_\Omega(U_0)$ of $U_0$ under $\underline{p}_\Omega$ contains a non-empty open set of $\CC^\Omega$.
\end{lemma}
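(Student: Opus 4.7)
The statement is a standard consequence of Chevalley's theorem on images of morphisms, combined with the fact that a dense constructible subset of an irreducible variety must contain a non-empty Zariski-open subset. My plan is to first upgrade the already-established dominance of $\underline{p}_\Omega$ to dominance of its restriction $\underline{p}_\Omega|_{U_0}$, then apply Chevalley to the restricted morphism, and finally extract the open piece.

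First I would verify that the restriction $\underline{p}_\Omega|_{U_0}: U_0 \to \CC^\Omega$ is still dominant. Since $\M_\CC$ is irreducible and $U_0$ is a non-empty Zariski-open subset, $U_0$ is dense in $\M_\CC$. Let $W$ be the Zariski-closure of $\underline{p}_\Omega(U_0)$ in $\CC^\Omega$. By continuity of $\underline{p}_\Omega$, the preimage $\underline{p}_\Omega^{-1}(W)$ is closed in $\M_\CC$ and contains $U_0$, hence contains $\overline{U_0}=\M_\CC$. Therefore $\underline{p}_\Omega(\M_\CC)\subset W$, and since $\underline{p}_\Omega(\M_\CC)$ is already dense in $\CC^\Omega$ by hypothesis, we conclude $W=\CC^\Omega$; that is, $\underline{p}_\Omega(U_0)$ is dense in $\CC^\Omega$.

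Next I would invoke Chevalley's theorem applied to the morphism of (quasi-projective) varieties $\underline{p}_\Omega|_{U_0}: U_0 \to \CC^\Omega$. This yields that $\underline{p}_\Omega(U_0)$ is a constructible subset of $\CC^\Omega$, i.e. a finite union $\bigcup_{k} (V_k \cap F_k)$ with each $V_k$ open and each $F_k$ closed in $\CC^\Omega$. Taking closures, $\CC^\Omega = \overline{\underline{p}_\Omega(U_0)} = \bigcup_k \overline{V_k \cap F_k} \subset \bigcup_k F_k$, and since $\CC^\Omega$ is irreducible some $F_{k_0}$ must equal all of $\CC^\Omega$. The corresponding piece $V_{k_0} \cap F_{k_0} = V_{k_0}$ is then a non-empty Zariski-open subset of $\CC^\Omega$ contained in $\underline{p}_\Omega(U_0)$, which is exactly the claim.

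The only mild subtlety is invoking Chevalley for a morphism defined on an open subvariety rather than on the full ambient variety, but this is immediate because $\underline{p}_\Omega$ is globally defined as a polynomial map and we are simply restricting it to the open subvariety $U_0$; no technical obstacle arises. The proof is essentially bookkeeping once one isolates the two ingredients (dominance of the restriction, and Chevalley plus irreducibility of $\CC^\Omega$).
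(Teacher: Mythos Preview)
Your proposal is correct and follows essentially the same route as the paper: both apply Chevalley's theorem to see that $\underline{p}_\Omega(U_0)$ is constructible, combine this with density of $\underline{p}_\Omega(U_0)$ in $\CC^\Omega$, and use irreducibility of the target to extract a non-empty open piece. Your version is in fact slightly more careful than the paper's---you spell out explicitly why $\underline{p}_\Omega(U_0)$ (and not merely $\underline{p}_\Omega(\M_\CC)$) is dense, and you correctly invoke irreducibility of $\CC^\Omega$ at the key step.
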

\begin{proof}
	A locally closed set is the intersection of a closed set with an open set. A constructible set is the finite union of locally closed sets. Chevalley's theorem says that a morphism of algebraic varieties takes a constructible set to a constructible set. Since $U_0$ is open, it is constructible, and thus $\underline{p}_\Omega(U_0)$ is also constructible. Hence, there exists a positive integer $s$, closed sets $Y_k, k \in [s]$ and non-empty open sets $V_k, k \in [s]$ of $\M_\CC$ such that $\underline{p}_\Omega(U_0) = \bigcup_{k \in [s]} Y_k \cap V_k$. For the sake of contradiction, suppose that $\underline{p}_\Omega(U_0)$ does not contain any non-empty open set. Then necessarily all $Y_k$'s are proper closed sets, otherwise some $V_k$ is contained in $\underline{p}_\Omega(U_0)$. Hence $\underline{p}_\Omega(U_0)$ is contained in the closed set $Y=\bigcup_{k \in [s]} Y_k$. This is a proper closed set because $\M_\CC$ is irreducible. But then the closure of $\underline{p}_\Omega(U_0)$ is contained in $Y$, which contradicts the fact that $\underline{p}_\Omega(U_0)$ is dense in $\CC^\Omega$. 
\end{proof}

By Lemma \ref{lem_Chevalley} $\underline{p}_\Omega(U_0)$ contains a non-empty open set $V_0$. Now each $\underline{\pi}_\Omega \in \P_{\Omega}$ is a bijective polynomial function on $\CC^\Omega$. Hence $\underline{\pi}_\Omega: \CC^\Omega \rightarrow \CC^\Omega$ is a homeomorphism of topological spaces, so that $\underline{\pi}_\Omega(V_0)$ is a non-empty open set of $\CC^\Omega$. Define $V = \bigcap_{\underline{\pi}_\Omega \in \P_{\Omega}} \underline{\pi}_\Omega(V_0)$. It is a non-empty open set of $\CC^\Omega$. 

\begin{lemma} \label{lem_V_invariance}
	For any $\underline{\pi}_\Omega' \in \P_{\Omega}$ we have that $\underline{\pi}_\Omega'(V) = V$. 
\end{lemma}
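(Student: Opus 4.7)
The plan is to observe that $\P_{m,\Omega} = \prod_{j \in [n]} \P_{m,\omega_j}$ is a finite group under componentwise composition (each factor $\P_{m,\omega_j}$ is a subgroup of $\P_m$), so left-multiplication by any fixed element is a bijection of $\P_{m,\Omega}$ onto itself. Combined with the fact that each $\underline{\pi}_\Omega$ acts as a bijection on $\CC^\Omega$ (its inverse is again in $\P_{m,\Omega}$), the invariance of $V$ follows by a standard reindexing of the intersection that defines it.

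Concretely, fix $\underline{\pi}_\Omega' \in \P_{m,\Omega}$. Since $\underline{\pi}_\Omega'$ is a bijection on $\CC^\Omega$, it commutes with arbitrary intersections of subsets, so
\[
\underline{\pi}_\Omega'(V) \;=\; \underline{\pi}_\Omega'\!\left(\bigcap_{\underline{\pi}_\Omega \in \P_{m,\Omega}} \underline{\pi}_\Omega(V_0)\right) \;=\; \bigcap_{\underline{\pi}_\Omega \in \P_{m,\Omega}} \bigl(\underline{\pi}_\Omega' \circ \underline{\pi}_\Omega\bigr)(V_0).
\]
Now use the group structure of $\P_{m,\Omega}$: the map $\underline{\pi}_\Omega \mapsto \underline{\pi}_\Omega' \circ \underline{\pi}_\Omega$ is a bijection of $\P_{m,\Omega}$ onto itself, so reindexing the intersection via this bijection yields
\[
\bigcap_{\underline{\pi}_\Omega \in \P_{m,\Omega}} \bigl(\underline{\pi}_\Omega' \circ \underline{\pi}_\Omega\bigr)(V_0) \;=\; \bigcap_{\underline{\sigma}_\Omega \in \P_{m,\Omega}} \underline{\sigma}_\Omega(V_0) \;=\; V,
\]
which proves the lemma.

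There is really no serious obstacle here: the argument is purely formal and relies only on $\P_{m,\Omega}$ being a group of bijections acting on $\CC^\Omega$. The only thing worth flagging is to verify in advance that $\P_{m,\Omega}$ is indeed closed under composition and inversion — which is immediate because each $\P_{m,\omega_j}$ is the full symmetric group on the index set $\omega_j$ (extended by the identity on $[m]\setminus \omega_j$). This closure is exactly what lets the reindexing step go through, and it is also the structural feature invoked in the remark following Theorem \ref{thm:BC-UMC} that the whole argument would carry over with $\P_{m,\Omega}$ replaced by any finite group of automorphisms of $\RR^\Omega$.
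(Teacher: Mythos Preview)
Your proof is correct and follows essentially the same approach as the paper's: both use that a bijection commutes with intersections to write $\underline{\pi}_\Omega'(V) = \bigcap_{\underline{\pi}_\Omega} (\underline{\pi}_\Omega' \circ \underline{\pi}_\Omega)(V_0)$, and then reindex using the fact that $\P_{m,\Omega}$ is a group so that left-multiplication by $\underline{\pi}_\Omega'$ permutes $\P_{m,\Omega}$ (the paper phrases this as the coset $\underline{\pi}_\Omega' \P_{m,\Omega}$ equaling $\P_{m,\Omega}$).
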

\begin{proof}
	If $f: S \rightarrow T$ is a one-to-one (injective) function of sets and $S_1, S_2$ are subsets of $S$, then we always have $f(S_1 \cap S_2) = f(S_1) \cap f(S_2)$. Hence $\underline{\pi}_\Omega'(V) = \bigcap_{\underline{\pi}_\Omega \in \P_{\Omega}} \underline{\pi}_\Omega' \circ \underline{\pi}_\Omega(V_0)$. But $\P_{\Omega}$ is a group under composition of functions so that the coset $\underline{\pi}_\Omega' \P_{\Omega} = \{\underline{\pi}_\Omega' \circ \underline{\pi}_\Omega \, | \, \underline{\pi}_\Omega \in \P_{\Omega} \}$ is equal to $\P_{\Omega}$. 
\end{proof}

%.
As noted earlier, $\underline{p}_\Omega$ is continuous and so the set $U = \underline{p}_\Omega^{-1}(V)$ is open. Moreover, it is non-empty since $V$ is a subset of $V_0$ which is a subset of the image of $\underline{p}_\Omega$. Hence $U$ is dense in $\M_\CC$. Suppose that $X^* \in U$ and set $\tilde{X} = \tilde{\underline{\pi}}_\Omega \underline{p}_\Omega(X^*)$ for some $\tilde{\underline{\pi}}_\Omega \in \P_{\Omega}$. It is enough to show that as $\underline{\pi}_\Omega$ ranges in $\P_{\Omega}$ there are only finitely many $X \in \M_\CC$ such that $\underline{\pi}_\Omega \underline{p}_\Omega(X) =  \tilde{X}$. This equation can be written as $\underline{p}_\Omega(X) = \underline{\pi}_\Omega' \underline{p}_\Omega(X^*)$ with $\underline{\pi}_\Omega' = \underline{\pi}_\Omega^{-1} \circ \tilde{\underline{\pi}}_\Omega$. Since $X^* \in U$, $\underline{p}_\Omega(X^*) \in V$. Thus $\underline{\pi}_\Omega' \underline{p}_\Omega(X^*) \in \underline{\pi}'_\Omega(V)$. By Lemma \ref{lem_V_invariance} we have $\underline{\pi}_\Omega'(V) = V$ and so $\underline{\pi}_\Omega' \underline{p}_\Omega(X^*) \in V$. But $V \subset V_0$ so that $\underline{\pi}_\Omega' \underline{p}_\Omega(X^*) \in V_0$. Since $V_0$ is a subset of $\underline{p}_\Omega(U_0)$, there is some $X_0 \in U_0$ such that $\underline{\pi}_\Omega' \underline{p}_\Omega(X^*) = \underline{p}_\Omega(X_0)$. By definition of $U_0$ the fiber $\underline{p}_\Omega^{-1}\big( \underline{p}_\Omega(X_0)\big)$ is a finite set. But $\underline{p}_\Omega^{-1}\big( \underline{p}_\Omega(X_0)\big) = \underline{p}_\Omega^{-1}\big( \underline{\pi}_\Omega' \underline{p}_\Omega(X^*)\big)$ so that there are finitely many $X$'s in $\M$ that map under $\underline{p}_\Omega$ to $\underline{\pi}_\Omega' \underline{p}_\Omega(X^*)$. As there are finitely many choices for $\underline{\pi}_\Omega'$, we are done.

\vskip 0.2in
\bibliographystyle{abbrvnat}
\bibliography{upcabib,Liangzu,UPCA-Manolis,MC-JMLR20}

\end{document}